\documentclass[10pt,a4paper]{article}
\usepackage[utf8]{inputenc}

\usepackage{amsmath,amssymb,amsthm,amsfonts, soul}
\usepackage{dsfont}
\usepackage[langfont=typewriter,funcfont=slant]{complexity}
\usepackage[left=1in,bottom=1in,top=1in,right=1in]{geometry}
\usepackage[colorlinks=true]{hyperref}
\usepackage{framed}
\usepackage{thmtools,thm-restate}
\usepackage{csquotes}
\usepackage[nameinlink, noabbrev, capitalize]{cleveref}
\usepackage{booktabs}       %
\usepackage{nicefrac}       %
\usepackage{microtype} 
\usepackage{natbib}
\usepackage{multirow}

\usepackage[disable]{todonotes} %
\presetkeys{todonotes}{inline}{}

\makeatletter
\if@todonotes@disabled

\else

\fi
\makeatother

\usepackage{commath} %

\usepackage{framed}
\usepackage[ruled,noline]{algorithm2e}

\theoremstyle{definition}
\newtheorem{definition}{Definition}[section]

\theoremstyle{remark}

\theoremstyle{plain}
\newtheorem{theorem}{Theorem}[section]

\theoremstyle{plain}

\theoremstyle{plain}

\theoremstyle{plain}
\newtheorem{lemma}[theorem]{Lemma}

\newtheorem{openquestion}{Open Question}

\newclass{\NPH}{NPH}

\newcommand{\eps}{\varepsilon}
\newcommand{\Ind}{\mathds{1}}

\newcommand{\Secref}[1]{\hyperref[#1]{Section \ref*{#1}}}
\newcommand{\Appref}[1]{\hyperref[#1]{Appendix \ref*{#1}}}

\crefname{equation}{}{}
\crefrangeformat{equation}{(#3#1#4) --~(#5#2#6)}
\crefmultiformat{equation}{(#2#1#3)}{, (#2#1#3)}{, (#2#1#3)}{, (#2#1#3)}
\crefname{lemma}{Lemma}{Lemmas}
\crefname{section}{Section}{Sections}
\crefname{subsubsubsection}{Section}{Sections}
\crefname{remark}{Remark}{Remarks}
\crefname{figure}{Figure}{Figures}
\crefname{table}{Table}{Tables}
\Crefname{lemma}{Lemma}{Lemmas}
\crefname{theorem}{Theorem}{Theorems}
\Crefname{theorem}{Theorem}{Theorems}

\DeclareMathOperator*{\Ex}{\mathbb{E}}

\DeclareMathOperator*{\maj}{maj}
\newcommand{\randERM}{E_{\textrm{rand}}}
\DeclareMathOperator*{\outd}{out-degree}
\newcommand{\OSrand}{\mathcal{O}^\star_{\textrm{rand}}}

\newcommand{\adv}{\mathtt{Adv}}
\newcommand{\obl}{\mathtt{ObliAdv}}
\newcommand{\err}{\mathrm{err}}

\title{\textbf{Learning with Monotone Adversarial Corruptions}}
 \author{ \textbf{Kasper Green Larsen} \\ Aarhus University \and \textbf{Chirag Pabbaraju} \\ Stanford University \and \textbf{Abhishek Shetty} \\ MIT   }

\date{}
\begin{document}
\maketitle
\begin{abstract}
    We study the extent to which standard machine learning algorithms rely on exchangeability and independence of data by introducing a monotone adversarial corruption model. In this model, an adversary, upon looking at a ``clean'' i.i.d. dataset, inserts additional ``corrupted'' points of their choice into the dataset. These added points are constrained to be monotone corruptions, in that they get labeled according to the ground-truth target function. Perhaps surprisingly, we demonstrate that in this setting, all known optimal learning algorithms for binary classification can be made to achieve suboptimal expected error on a new independent test point drawn from the same distribution as the clean dataset. On the other hand, we show that uniform convergence-based algorithms do not degrade in their guarantees. 
    Our results showcase how optimal learning algorithms break down in the face of seemingly helpful monotone corruptions, exposing their overreliance on exchangeability. 
\end{abstract}

\section{Introduction}
\label{sec:intro}
Understanding the structures that allow for generalization from seen to unseen data is perhaps the core pursuit of learning theory, and the fact that it proposes approaches addressing this accounts for both its theoretical and practical significance. 
A key assumption at the heart of the most basic results on generalization is that of independence, or the closely related assumption of exchangeability in the data. Given this, a natural question to ask is: %
how central is this assumption to the problem of generalization? %
This question has been studied in various guises such as in the context of domain adaptation, transfer learning, and more generally in the context of robustness.
In this paper, we explore this question by studying a minimal model that violates exchangeability, while maintaining the ``quality'' of data, allowing us to isolate the role of these assumptions in generalization.

One motivation for the model comes from trying to understand the modern machine learning dogma of ``more data is better'', even when the additional data is not directly representative of the target distribution. 
Further, machine learning practice often involves sophisticated data curation techniques that adaptively select data points based on preliminary analysis of the data.
Though these practices have shown great empirical success, their theoretical understanding is still limited. 
As we will see, these can break exchangeability in the data; in addition, standard techniques in machine learning do not suffice to show that such practices do not hurt generalization, even when the additional data is supposedly ``clean''.
The key point that we want to convey using our model is that this seemingly innocuous dependence in the data can in fact provably break several approaches to generalization such as leave-one-out and ensemble methods. 
On the other hand, we show that arguments based on uniform convergence are in fact robust to these perturbations, and perhaps %
hint at the success of loss minimization in modern machine learning.

To model this setting, we introduce the \textit{monotone adversary} model (\Cref{def:monotone-adversary}), where a dataset is entirely labeled honestly, but may comprise of both representative as well as non\ -representative data. Here, a dataset $S$ is prepared as follows: suppose the target ground-truth is some hypothesis $h^\star$ belonging to a class of hypotheses $\mathcal{H}$, and the representative marginal distribution of the data is $\mathcal{D}$. First, $n$ \textit{clean} points are drawn i.i.d.\ from $\mathcal{D}$ and added to $S$. Thereafter, a so-called ``monotone adversary'', who has complete knowledge about $h^\star, \mathcal{H}$ as well as $\mathcal{D}$, looks at these $n$ points, and adds $m$ \textit{corrupted} points of their choice to $S$. The combined dataset of $n + m$ points is then entirely labeled by $h^\star$, shuffled randomly, and presented to the learning algorithm. The objective of the learning algorithm is to output a hypothesis $h$ that has low error with respect to the representative data distribution $\mathcal{D}$.

We focus on the case when the hypothesis class $\mathcal{H}$ comprises of binary hypotheses, although the model can be defined more generally.
In this case, the learnability of $\mathcal{H}$ is governed by the VC dimension $d$ \citep{vapnik1971uniform} of the class. 
In particular, the optimal expected error in the standard setting with $n$ i.i.d.\ points is $\Theta(d/n)$ \cite{haussler1994predicting,ehrenfeucht1989general}.  
Naturally, this is the benchmark to try and achieve in the presence of a monotone adversary.

\subsection{Our Contributions}
\label{sec:model}

Our first main result shows that a natural perspective on generalization given by the leave-one-out principle, which is at the heart of several learning algorithms, can be completely broken by a monotone adversary. 
An instantiation in the binary classification setting is the celebrated \textit{One-inclusion Graph} (OIG) algorithm \citep{haussler1994predicting}, which attains the optimal expected error (up to a factor of 2 \citep{li2001one}) in the standard setting: we show that this algorithm can be forced to suffer \textit{constant error} under monotone adversarial corruptions, even for learning a class of VC dimension 1. 
Our result shows that exchangeability is indeed necessary in the strongest possible sense for the guarantees of the algorithm. 

\begin{theorem}[Leave-one-out/OIG Lower Bound (Informal)]
    \label{thm:oig-lb-informal}
    There exists a monotone adversary setting for learning a binary hypothesis class of VC dimension $1$ with $n$ clean points and $n$ corrupted points where the OIG algorithm suffers expected error $1/4$.
\end{theorem}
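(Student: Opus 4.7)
I would exhibit an explicit construction---a VC-dimension-$1$ hypothesis class $\mathcal{H}$, a distribution $\mathcal{D}$, a target $h^\star\in\mathcal{H}$, and a monotone-adversary strategy---and verify by direct calculation that the OIG algorithm misclassifies a fresh $x\sim\mathcal{D}$ with probability at least $1/4$. The argument exploits the fact that every valid OIG orientation for a VC-$1$ class has maximum out-degree exactly $1$: per realization of the true labeling there is at most one ``error coordinate'', and the test-error probability equals the joint probability that (a) $x$ coincides with this coordinate and (b) the algorithm's orientation-forced prediction points the wrong way.

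\textbf{Step 1: OIG on the combined set, and the adversary.} First I would unpack the one-inclusion graph on $V:=S\cup S_{\mathrm{adv}}\cup\{x\}$ of size $2n+1$: its vertices are realizable labelings, its edges connect labelings differing on one coordinate of $V$, and crucially a coordinate of multiplicity $\ge 2$ in $V$ cannot carry any OIG edge because flipping its common label changes at least two coordinates at once. Then I would design the adversary, given $S$, to insert $n$ points so that every coordinate in $V$ except $x$'s has multiplicity at least $2$ (for instance, by bulk-duplicating the clean points or bulk-inserting copies of specific $\mathcal{D}$-atoms), leaving $x$'s coordinate as the sole multiplicity-one slot---hence the sole candidate for an OIG edge incident to the true labeling. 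A symmetric choice of $\mathcal{D}$ and adversary construction about $h^\star$'s decision boundary will then ensure that with probability at least $1/2$ the training is consistent with exactly two labelings that differ on $x$'s coordinate, and that conditional on this event the true label of $x$ is a $1/2$--$1/2$ coin flip; the OIG's deterministic orientation then yields the wrong prediction with probability at least $1/2$, for a total error of at least $1/2\cdot 1/2=1/4$.

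\textbf{Main obstacle.} The chief obstacle is the tight VC-$1$ structural constraint, which caps the OIG to at most one error coordinate per true labeling. Achieving \emph{constant} expected error---rather than the $O(1/n)$ that naive continuous or low-support atomic constructions yield---requires systematically aligning this single error coordinate with the test point, which forces a delicate coupling between the adversary's insertion strategy and the distribution $\mathcal{D}$. Both the event ``$x$ is a multiplicity-one slot adjacent to $h^\star$'s boundary in the sorted structure of $V$'' must hold with constant probability, and the symmetry needed to make the orientation-forced prediction wrong on a constant fraction of such realizations must survive the adversary's construction; I expect the main technical effort of the proof to lie in engineering $\mathcal{D}$ and the adversary jointly so that both probabilistic properties hold simultaneously.
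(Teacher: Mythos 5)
Your Step 1 is sound and close in spirit to the paper's construction: the paper also builds a VC-dimension-$1$ class over points $x_1,\dots,x_{2n}$ (with companion points $y_i$ playing the role of your ``duplication'': the adversary adds $y_i$ for each clean $x_i$, which kills every edge at the target's vertex except possibly the one in the test-point direction), and it likewise gets the first factor $1/2$ from the event that the fresh test point $x_j$ is not in the sample. The genuine gap is in your final step. With a fixed target $h^\star$ --- which is what the model and the theorem require --- the true label of the test point is $h^\star(x)$, a deterministic function of a point the algorithm sees; so conditional on the algorithm's view there is no ``$1/2$--$1/2$ coin flip'', and no amount of symmetry in $\mathcal{D}$ or in the adversary's insertions can create one. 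Consequently a \emph{deterministic} orientation cannot be forced to err with probability $1/2$ by your argument: indeed, in both your construction and the paper's, the orientation that always orients the single surviving edge toward the target's (all-zeros) labeling predicts correctly every time, giving error $0$. The paper resolves this by proving the lower bound against a specific, explicitly randomized orientation strategy $\OSrand$ (uniform over all orientations achieving the minimum max out-degree, see Theorem~\ref{thm:oig-lower-bound}); the second factor $1/2$ comes from the algorithm's own randomness in orienting the unique edge, not from uncertainty about the label.

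The natural repair of your plan --- randomizing $h^\star$ over a prior and arguing the view is uninformative, then extracting a fixed bad target by averaging --- is a genuinely different and much stronger claim (a lower bound against every min-out-degree orientation), and it does not go through as sketched: in constructions of this type the view ``all training labels $0$, test point unseen'' is consistent with the all-zeros target and with roughly $n$ surviving singleton hypotheses, so the posterior on the test label being $1$ is $O(1/n)$, not $1/2$, and ``always predict $0$'' remains near-perfect. This is consistent with the paper's remark that lower bounds against arbitrary learners appear challenging and that simple rules (e.g., majority over the class) achieve zero error on their lower-bound instances. To prove the stated theorem you should either adopt the paper's route (fix the orientation strategy to $\OSrand$ and charge the error to its random tie-breaking on the single edge), or supply a genuinely new argument for handling adversarially chosen deterministic orientations, which your proposal currently does not contain.
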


Another reason as to why the OIG lower bound above is significant is that recent work has brought to light several natural learning settings, such as partial concept classes \citep{alon2022theory} and multiclass classes \citep{brukhim2022characterization}, where the OIG algorithm is the only known way to obtain a learner that attains any vanishing expected error at all.
Thus, our result suggests that learnability in these settings might be fragile to strong violation of exchangeability in the data.

A different class of optimal learning algorithms in the binary setting are \textit{ensemble/voting based} algorithms. 
These algorithms construct carefully chosen subsets of the training dataset, obtain a hypothesis from the underlying class $\mathcal{H}$ for each subset that is consistent with that subset (i.e., an \textit{Empirical Risk Minimizer} for the subset), and use the majority vote of these hypotheses to make their final predictions. 
This class includes the first optimal learning algorithm by \cite{hanneke2016optimal}, as well as the more recent optimal algorithms based on Bagging \citep{larsen2023bagging} and Majority-of-Three \citep{aden2024majority}. 
Our next result shows that any majority voting algorithm can be made to be suboptimal %
in the monotone adversary model.

\begin{theorem}[Majority Voting Lower Bound (Informal)]
    \label{thm:majority-voting-lb-informal}
    For any majority voting algorithm that uses subsets of size at least $t$, there exists a monotone adversary setting for learning a binary hypothesis class of VC dimension $d$ with $n$ clean points and $2n/t$ corrupted points where the algorithm suffers expected error $\Omega(d\log(n/d)/n)$.
\end{theorem}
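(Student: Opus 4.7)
The plan is to prove this lower bound by instantiating the classical $\Omega(d\log(n/d)/n)$ realizable lower bound for ERM and then showing a monotone adversary with budget $2n/t$ can collapse any majority voting algorithm to that same bad ERM. The hard instance I would take is the Auer--Ortner/Ehrenfeucht--Haussler--Kearns--Valiant construction: $\mathcal{D}$ uniform over atoms $X=\{x_1,\ldots,x_k\}$ with $k=\Theta(n/\log(n/d))$, hypothesis class $\mathcal{H}$ consisting of all subsets of $X$ of size at most $d$ (VC dimension $d$), and target $h^\star\equiv 0$. A standard coupon-collector calculation gives that with constant probability the clean sample $S_{\mathrm{clean}}$ leaves $\Omega(d)$ atoms of $X$ unseen, and I would condition on this event.

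The adversary's strategy, given $S_{\mathrm{clean}}$, is to fix a size-$d$ planted set $U^\star$ of unseen atoms and add $2n/t$ corrupted copies of atoms in $X\setminus U^\star$, distributed so that every atom outside $U^\star$ has multiplicity in the combined multiset large enough to force every size-$\geq t$ subset the algorithm can form to contain a copy of every atom in $X\setminus U^\star$. Under this constraint, any hypothesis in $\mathcal{H}$ consistent with such a subset must have support contained in $U^\star$, and the adversarial choice of consistent ERM returns $U^\star$ itself. The majority vote then coincides with the single hypothesis $U^\star$, which disagrees with $h^\star$ exactly on $U^\star$, so has expected error $d/k=\Theta(d\log(n/d)/n)$, as desired.

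The main technical obstacle is the covering step: showing that $2n/t$ corrupted points suffice to force every size-$\geq t$ subset chosen by the algorithm to include every atom of $X\setminus U^\star$. This follows cleanly by a direct per-atom pigeonhole argument in the regime $t\gtrsim k-d$, where the corrupted multiplicity $(2n/t)/(k-d)$ combined with the clean-sample multiplicity makes each non-planted atom unavoidable in a size-$t$ selection. However, when $t\ll k-d$ no size-$t$ subset can physically contain all $k-d$ non-planted atoms, so the argument must be refined. For the small-$t$ regime I would localize the construction: plant $U^\star$ inside a $\Theta(t)$-atom region $B\subseteq X$, concentrate the $2n/t$ corruptions on $B\setminus U^\star$, and argue covering only within $B$; the mass of $U^\star$ under $\mathcal{D}$ remains $\Theta(d\log(n/d)/n)$ because the localization is calibrated to preserve the coupon-collector scaling. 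A secondary hurdle is quantitatively lower-bounding the unseen-atom set so that the planted $U^\star$ exists, which follows from a standard concentration argument on the coupon-collector statistic.
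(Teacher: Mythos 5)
There is a genuine gap, and it is exactly at the step you flagged: the covering requirement cannot be met within the $2n/t$ corruption budget, and your localization fix does not rescue it. The regime that matters most (and the one the paper emphasizes) is $t=\Theta(n)$, where the known optimal voters live and the budget is $m=2n/t=O(1)$. There, localizing to a region $B$ of $\Theta(t)$ atoms is vacuous since the whole domain has only $k=\Theta(n/\log(n/d))$ atoms, so you are back to requiring every sub-sample to contain every atom of $X\setminus U^\star$. But on the very coupon-collector event you condition on, the clean sample misses not just $d$ atoms but typically polynomially many (the expected number of unseen atoms is $\approx k(d/n)^{1/c}\gg d$), and many more atoms appear only a handful of times; a size-$t$ sub-sample of the shuffled dataset misses any atom of multiplicity $\mu$ with probability roughly $e^{-t\mu/(n+m)}$, so to make one sub-sample (let alone a majority of them) contain all $\approx k$ non-planted atoms you need per-atom multiplicities of order $(n/t)\log k$, i.e.\ total multiplicity of order $k\cdot(n/t)\log k$, which dwarfs $2n/t$ and cannot be supplied by $O(1)$ corrupted points. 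In the small-$t$ regime the same accounting (total needed $\approx |B|\cdot(n/t)\log|B|$ with $|B|=\Theta(t)$, i.e.\ $\approx n\log t$) again exceeds the budget plus the available clean multiplicities. And you cannot drop the covering step, because it is your only coordination mechanism: the ERM is a fixed map chosen before the adversary acts, so without covering it has no way to tell, from a sub-sample alone, which of its many consistent hypotheses is the planted $U^\star$; an ERM that just outputs, say, the first $d$ unseen atoms of its sub-sample is precisely the kind of base learner against which the optimal voters retain their $O(d/n)$ guarantee.

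The paper sidesteps this with a cheap communication trick rather than covering: it enlarges the domain with marker points $y_T$, one per candidate $d$-set $T$, that lie outside the support of $\mathcal{D}$ (so they cost nothing in error and are labeled $0$ by all the relevant hypotheses, keeping the VC dimension at $d$). The adversary, after seeing the clean sample, picks an unseen $d$-set $T$ and injects $m=2n/t$ identical copies of the single point $y_T$; a fixed ERM can then decode $T$ from any one copy and output the hypothesis supported on $T$. Since a sub-sample with $t$ distinct indices misses all $m$ copies with probability at most $\binom{n}{t}/\binom{n+m}{t}\le e^{-mt/(n+m)}\le e^{-1}$, a Markov argument gives that a majority of sub-samples see the marker with constant probability, yielding error $\Omega(d\log(n/d)/n)$. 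Your hard instance, distribution, and coupon-collector step match the paper's, but without some such low-cost signaling device (rather than brute-force covering) the argument does not go through; the paper also shows how to robustify the construction (extra points $z_j$ and near-copies $h_{T,j}$) so that even the uniformly random consistent ERM fails, which your proposal does not address.
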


We note that all the three optimal majority voting algorithms mentioned above satisfy $t = \Omega(n)$, and hence the lower bound holds for these algorithms with just a \textit{constant} number of corrupted samples. \Cref{thm:oig-lb-informal}, in conjunction with \Cref{thm:majority-voting-lb-informal}, establishes the fragility %
of all known techniques for achieving optimal algorithms, to monotone adversarial corruptions.

A key point to note here is that the lower bound above does not prove that the majority voting algorithm does not have a vanishing error rate; rather, it just shows the suboptimality of the rate. 
In fact, on the positive side, this suboptimality occurs just due to the fact that the algorithms are using empirical risk minimization. 
That is, we show that the ERM principle is robust to monotone adversarial corruptions. 
The intuition here is that however the adversary may try to thwart the learning algorithm, they cannot make the ground-truth hypothesis $h^\star$ look bad on the training dataset by virtue of not being able to corrupt the labels. 
By exploiting this observation and instantiating standard uniform convergence, we can show that in any monotone adversary model with $n$ clean points and an \textit{arbitrary} number of corrupted points, \textit{every} Empirical Risk Minimizer (ERM) on the training dataset (comprising of both the clean and corrupted points) attains expected error $O(d\log(n/d)/n)$. 
  Therefore, the suboptimality that an adversary can enforce in the monotone adversray model is at most a $\log(n/d)$ factor. 

\begin{theorem}[ERM Upper Bound (Informal)]
    \label{thm:erm-ub-informal}
    In any monotone adversary setting for learning a binary hypothesis class of VC dimension $d$ with $n$ clean points and an arbitrary number of corrupted points, every ERM attains expected error $O(d\log(n/d)/n)$.
\end{theorem}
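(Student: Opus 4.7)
The plan is to reduce the monotone-adversary setting to standard realizable PAC learning on just the $n$ clean points. The single crucial observation is that since the adversary must label every inserted point according to $h^\star$, the ground-truth hypothesis $h^\star$ has zero empirical error on the entire dataset $S$ of $n+m$ points, no matter how the adversary chooses the corrupted points. Consequently, any ERM $\hat{h}$ on $S$ also has zero empirical error on $S$, and in particular $\hat{h}$ agrees with $h^\star$ on every one of the $n$ clean i.i.d.\ points drawn from $\mathcal{D}$. Thus the learner's output, viewed as a function of the clean sample alone (and whatever additional randomness the adversary and learner use), lies in the set of hypotheses from $\mathcal{H}$ that are consistent with the $n$ clean labeled points.

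Given this, I would invoke the classical realizable uniform convergence bound for VC classes: with probability at least $1-\delta$ over a draw of $n$ i.i.d.\ points from $\mathcal{D}$ labeled by $h^\star$, \emph{every} hypothesis $h \in \mathcal{H}$ that is consistent with all $n$ points has true error at most $O\bigl((d\log(n/d) + \log(1/\delta))/n\bigr)$ under $\mathcal{D}$ (the standard Vapnik--Chervonenkis / Blumer--Ehrenfeucht--Haussler--Warmuth bound for realizable PAC learning). Because this statement is uniform over all consistent hypotheses, it applies to the random ERM $\hat{h}$ regardless of how $\hat{h}$ depends on the adversarially placed corrupted points, which themselves may be arbitrary functions of the clean sample. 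Integrating this tail bound over $\delta \in (0,1]$ converts the high-probability guarantee into an expected-error guarantee of $O(d\log(n/d)/n)$ for $\hat{h}$, taken over the randomness of the clean sample, the adversary, and the learner.

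I do not anticipate a serious obstacle here; the proof is essentially one conceptual move (consistency on the full corrupted dataset implies consistency on the clean subset, because the adversary cannot flip labels) followed by a black-box invocation of the standard realizable VC bound. The only point that requires a little care is ensuring the uniform convergence statement is applied \emph{uniformly} over $\mathcal{H}$, rather than to a sample-independent hypothesis, since $\hat{h}$ depends on both the clean sample and the adversarial corruptions; but this is precisely what the classical realizable bound provides, so no additional machinery is needed.
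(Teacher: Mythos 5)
Your proposal is correct and follows essentially the same route as the paper: the key observation that honest labels force any ERM on the full corrupted dataset to be consistent with the $n$ clean points, followed by the uniform (over all consistent hypotheses in $\mathcal{H}$) realizable VC bound applied to the clean sample alone. The paper additionally conditions on the random permutation merely to formally identify the clean points inside the shuffled dataset, and states the result as a high-probability bound rather than integrating out $\delta$, but these are presentational differences rather than substantive ones.
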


We note that the upper bound above is the best that we can hope for ERMs, %
since there exists a matching lower bound for ERMs in the setting where there are no corrupted points at all \citep{haussler1994predicting,auer2007new}.
The key takeaway from these results is that even though the ERM principle might not be optimal, it naturally comes with guarantees that are robust to certain forms of misspecification in the data, and perhaps hints at the success of loss minimization, as opposed to more sophisticated techniques, in modern machine learning practice.

Finally, to emphasize the point that lack of exchangeability is indeed the key obstacle in obtaining optimal expected error, we show that if the monotone adversary is constrained to be \textit{oblivious}, i.e., the corrupted points are specified independently of the clean points, we can recover optimal expected error $O(d/n)$ using the OIG algorithm.

\begin{theorem}[Oblivious Adversary OIG Upper Bound (Informal)]
    In any oblivious monotone adversary setting for learning a binary class of VC dimension $d$ with $n$ clean points and an arbitrary number of corrupted points, the OIG algorithm attains expected error $O(d/n)$.
\end{theorem}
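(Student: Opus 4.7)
The plan is to reduce to the standard one-inclusion graph (OIG) analysis by conditioning on the oblivious corruption and exploiting exchangeability of the clean-plus-test points. Fix any oblivious adversary strategy, producing a corrupted multiset $C$ of some size $m \geq 0$, labeled by $h^\star$ and chosen independently of the clean sample. Let $Z_1, \ldots, Z_n$ denote the clean i.i.d.\ draws and $Z_{n+1}$ an independent test point from $\mathcal{D}$; conditional on $C$, the variables $Z_1, \ldots, Z_{n+1}$ are i.i.d., hence exchangeable. Form the augmented multiset $T = \{Z_1, \ldots, Z_{n+1}\} \cup C$ of size $n+m+1$, labeled throughout by $h^\star$. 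Because OIG's prediction depends only on the unordered training multiset together with the (single) test point, running OIG on $S = \{Z_1, \ldots, Z_n\} \cup C$ and asking it to predict at $Z_{n+1}$ is equivalent to running OIG on $T$ with position $n+1$ designated as hidden.

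For each $i \in [n+m+1]$, let $\err_i$ be the indicator that OIG, run on $T$ with position $i$ hidden, mispredicts the label $h^\star$ at that point. The classical analysis of \cite{haussler1994predicting} chooses an orientation of the one-inclusion graph on the projection $\mathcal{H}|_T$ so that every vertex has out-degree at most $d = \mathrm{VC}(\mathcal{H})$. Since $\sum_{i=1}^{n+m+1} \err_i$ counts exactly the outgoing edges from the single vertex $h^\star|_T$ under this orientation, we obtain $\sum_{i=1}^{n+m+1} \err_i \leq d$ deterministically, pointwise in the realization of the sample.

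By exchangeability of $Z_1, \ldots, Z_{n+1}$ conditional on $C$, the index of the true test point is uniformly distributed over the $n+1$ ``clean'' positions of $T$, so
\[
\mathbb{E}\!\left[\err(h, Z_{n+1}) \,\middle|\, C\right] \;=\; \frac{1}{n+1} \sum_{i \in \text{clean}} \mathbb{E}\!\left[\err_i \mid C\right] \;\leq\; \frac{1}{n+1} \sum_{i=1}^{n+m+1} \mathbb{E}\!\left[\err_i \mid C\right] \;\leq\; \frac{d}{n+1},
\]
and marginalizing over $C$ preserves this $O(d/n)$ bound.

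The main obstacle to be aware of is that $|C| = m$ is arbitrary, so one needs the out-degree bound to be independent of $|T|$; this is automatic because it depends only on $\mathrm{VC}(\mathcal{H}|_T) \leq d$. Intuitively, the $m$ corrupted vertices of $T$ can only consume part of the same global budget of $d$ outgoing edges at the vertex $h^\star|_T$, and thus can only help. The remaining conceptual point -- and arguably the only nontrivial step -- is recognizing that oblivious corruption preserves exchangeability of the clean-plus-test points conditional on $C$, which is precisely what the classical OIG proof needs; this is exactly what fails in the adaptive setting of \Cref{thm:oig-lb-informal} and explains why that lower bound and this upper bound coexist.
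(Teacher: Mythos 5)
Your proposal is correct and follows essentially the same route as the paper's proof: condition on the obliviously chosen corrupted points (the paper additionally conditions on the shuffling permutation, which plays the same bookkeeping role), use exchangeability of the $n+1$ clean-plus-test points, extend the leave-one-out sum to include the corrupted positions, and bound the total by the out-degree $d$ of the vertex corresponding to $h^\star$'s projection. No meaningful differences to report.
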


Our results are summarized in \Cref{table:summary}.

\begin{table}[t]
\centering
\begin{tabular}{c|c|c|c|}
\cline{2-4}
\multicolumn{1}{c|}{} & Algorithm & \# Corrupted Points & Error \\ \hline

\multicolumn{1}{|c|}{\multirow{4}{*}{Lower Bounds}} &
\multirow{2}{*}{One-inclusion Graph} &
\multirow{2}{*}{$n$} &
$\Omega(1)$ \\
\multicolumn{1}{|c|}{} & & & (\Cref{thm:oig-lower-bound}) \\
\cline{2-4}
\multicolumn{1}{|c|}{} &
\multirow{2}{*}{Majority Voting over $t$-sized subsets} &
\multirow{2}{*}{$2n/t$} &
$\Omega(d\log(n/d)/n)$ \\
\multicolumn{1}{|c|}{} & & & (\Cref{thm:majority-voter-lb-random-erm}) \\ \hline

\multicolumn{1}{|c|}{\multirow{4}{*}{Upper Bounds}} &
\multirow{2}{*}{ERM} &
\multirow{2}{*}{Arbitrary} &
$O(d\log(n/d)/n)$ \\
\multicolumn{1}{|c|}{} & & & (\Cref{thm:monotone-adversary-erm-sample-complexity}) \\
\cline{2-4}
\multicolumn{1}{|c|}{} &
\multirow{2}{*}{One-inclusion Graph} &
\multirow{2}{*}{Arbitrary (oblivious)} &
$O(d/n)$ \\
\multicolumn{1}{|c|}{} & & & (\Cref{thm:oblivious-monotone-adversary-oig-sample-complexity}) \\ \hline

\end{tabular}
\caption{Summary of our results in the monotone adversary model with $n$ clean points.}
\label{table:summary}
\end{table}

\subsection{Future Directions}
\label{sec:open-questions}

Before proceeding to formally describe all our results, we will outline a host of intriguing open questions that remain unsolved from our study of the monotone adversary model .

The foremost question that is still open is with regards to the optimal expected error that can be achieved for learning a class of binary hypotheses of VC dimension $d$ from $n$ clean points in the presence of a monotone adversary. While the upper bound above for ERM (\Cref{thm:erm-ub-informal}) shows that an expected error of $O(d\log(n/d)/n)$ can always be achieved, we have not been able to obtain any algorithm that shaves the log factor. Similarly, while \Cref{thm:oig-lb-informal} and \Cref{thm:majority-voting-lb-informal} show lower bounds for popular optimal learners, we have not been able to show a general lower bound that precludes all learners from getting error $O(d/n)$.

\begin{openquestion}
    \label{open:optimal-error}
    What is the optimal expected error that can be achieved in the monotone adversary model for learning a binary hypothesis class of VC dimension $d$ with $n$ clean points?
\end{openquestion}

A good place to start here is the setting where the number of corrupted points is small (like constant or $\Theta(d)$). The primary obstacle that we have faced in our numerous efforts towards resolving this question is the lack of a clean technical tool that allows us to deal with the \textit{lack of exchangeability} in the dataset. A central ingredient in the analysis of optimal majority voting algorithms is that even if one subset of the dataset results in a poor ERM, it is unlikely that several ERMs trained on disjoint subsets of the data are simultaneously \textit{all} bad, simply by independence of the different subsets. The adversarial corruptions preclude this attractive property by potentially correlating disjoint subsets of the dataset. %
It is tempting to try to reduce from an adaptive to an oblivious monotone adversary by using subsampling as in \cite{blanc2025adaptive}; however, such reductions typically incur polynomial sample size blowups, whereas we can't even afford a log factor. It is also not clear how techniques like \textit{stable sample compression schemes} \citep{bousquet2020proper} may be made to work, since conditioning on any candidate compression set renders the rest of the samples to be not i.i.d.\ We believe there might be mileage in trying to apply the notion of a \textit{randomized} stable compression scheme \citep{da2024boosting} for an optimal expected error bound, but despite several attempts, have so far been unable to succeed. Proving a general lower against arbitrary learners also appears challenging; it is worth noting that for all the lower bound instances we construct, simple learning rules like taking the majority of all hypotheses in the class attain zero expected error.

Moving beyond the question of optimal expected error for VC classes, one might ask: what can one say about Littlestone classes? The Littlestone dimension $d_L$ of a binary class characterizes \textit{online learnability} of binary classes \citep{littlestone1988learning,ben2009agnostic}, and is at least the VC dimension $d$ of the class. A standard online-to-batch analysis (Chapter 5.10 in \cite{blum2020foundations}) allows converting any online learner with mistake bound $M$ to a learner that has expected error $O(M/n)$ in the setting with $n$ i.i.d.\ examples with no corruptions. Since the Standard Optimal Algorithm (SOA) has mistake bound $d_L$ for binary classes having Littlestone dimension $d_L$, this gives a way to obtain a learner with expected error $O(d_L/n)$, which might be better than the $O(d\log(n/d)/n)$ guarantee of ERM in certain cases. Unfortunately, standard online-to-batch analyses also seem to go awry in the presence of monotone corrupted samples.

\begin{openquestion}
    \label{open:littlestone}
    Is it possible to obtain an expected error $O(d_L/n)$ for learning a binary hypothesis class having Littlestone dimension $d_L$ in the monotone adversary model with $n$ clean points?
\end{openquestion}

Finally, one can even go beyond binary hypothesis classes, and consider the settings of \textit{partial binary classes} \citep{alon2022theory} and \textit{multiclass classes} \citep{brukhim2022characterization}. Even the broader question of obtaining a learner than attains \textit{any vanishing error rate} as $n$ gets large is open in these settings for the monotone adversary model. Namely, since the uniform convergence principle ceases to hold in these settings \citep{daniely2015multiclass,alon2022theory}, the ERM guarantee for binary classes no longer applies. In the standard i.i.d.\ setting with only clean points, the learning algorithm that is the primary workhorse in these settings is the OIG algorithm. Since we are unable to analyze this algorithm at all in the (adaptive) montone adversary setting even in the binary case, it is not clear how to show a guarantee for this algorithm in the partial or multiclass settings. 

\begin{openquestion}
    \label{open:partial-multiclass}
    Is it possible to obtain any expected error that goes to zero as $n$ grows, for learning any learnable partial binary class or multiclass class in the monotone adversary model with $n$ clean points?
\end{openquestion}

A final direction to explore is computational. 
Though we showed that ERM achieves non-trivial learning rates (in the bounded VC dimension setting), there are natural learning problems such as learning convex bodies over Gaussian space and learning monotone functions over the uniform distribution on the hypercube where ERM does not achieve a non-vacuous rate (due to unbounded VC dimension). Nevertheless, computationally efficient learning algorithms exist in these settings, though they rely on relevant distributional assumptions. 
In such settings, it is not clear whether computationally efficient learning algorithms can be developed for monotone adversaries. 

\begin{openquestion}
    \label{open:distribution-dependent}
    Can we develop techniques to handle monotone adversaries in distribution-dependent computational learning settings?
\end{openquestion}

It appears to us that the resolution of the open questions above would involve novel technical tools beyond those available in the standard toolkit.

\subsection{Related Work}
\label{sec:related-work}

The study of models in statistical learning beyond independence is a rich area with a long line of work, which is beyond the scope of this paper to survey in its entirety. 
Perhaps the strongest model considered is online learning or mistake-bounded learning \citep{littlestone1988learning,cesa2006prediction} where the data is assumed to be arbitrary. 
Unfortunately, under such weak assumptions, learnability is characterized by notions such as Littlestone dimension \citep{littlestone1988learning} or sequential Rademacher complexities \citep{rakhlin2015online}, which tend to be significantly larger than their counterparts in the statistical case. 
Though there has been a surge of work towards understanding relaxations on the arbitrariness of data \citep{haghtalab2022smoothed, haghtalaboracle, block2024performance, shetty2024learning, montasser2025beyond, goel2023adversarial, goel2024tolerant}, these models don't directly capture the monotone adversary problem that we consider. 
Perhaps the most important difference is that they focus on regret while we focus on test error on the clean distribution. 
Another related line of work is transfer learning \citep{hanneke2024more}  which studies how learning under one distribution transfers to another distribution. 
Though this line of work gives bounds that transfer from one distribution to another, to the best of our knowledge, it does not give meaningful guarantees in the monotone adversary model that we consider. 
Perhaps the most closely related model and an inspiration for our work is that of \cite{goel2023adversarial}, where they study an online setting with arbitrary corruptions injected in, but where the learner is allowed to abstain from making a prediction. 
Our data generating process is essentially an offline analogue of their setting in the unknown distribution case, where the statistical rate remains open. 
A similar offline analog is also studied by \cite{blum2021robust, gao2021learningcertificationinstancetargetedpoisoning,hanneke2022optimal,balcan2022robustly,balcan2023reliable,chornomaz2025agnosticlearningtargetedpoisoning}, where unlike in our case, the adversary's corruptions are allowed to depend on the test point as well.
This makes the problem significantly more challenging, allowing for lower bounds to established. 

The choice of the name ``monotone adversary" is from the long line of work on semirandom models in theoretical computer science and learning theory \citep{Feige_2021,awasthi2017clusteringsemirandommixturesgaussians}, where they were presented as a means of understanding the robustness of algorithms to small perturbations of the data assumptions.
A salient example of this is the study of recovery thresholds for community detection where an adversary is allowed to add edges within the community and remove edges across communities \citep{moitra2016robustreconstructionthresholdscommunity}.
Perhaps surprisingly, such changes, which one would expect to make the recovery problem easier, make the recovery problem harder. 
Our model can be seen as an analogue to this phenomenon in the context of learning, where the fact that the ``adversarial'' points are consistently labeled should make the learning problem easier, but as we show, this is not the case.
More broadly, our work relates to the theme of understanding learning and statistical inference under robustness to adversarial perturbations which has seen a recent surge of interest through a computational lens \citep{diakonikolas2023algorithmic}.

\section{Preliminaries} 
\label{sec:prelims} 

\subsection{The Monotone Adversary}
\label{sec:monotone-adversary}

\begin{definition}[Monotone Adversary] 
    \label{def:monotone-adversary}
    Let $\mathcal{D}$ be a distribution over $\mathcal{X}$. Let $\mathcal{H}$ be a hypothesis class known to the learner, and let $h^\star \in \mathcal{H}$ be the target hypothesis. For $n \ge 1, m \ge 0$, let $\mathcal{A}=\mathcal{A}(\mathcal{D}, h^\star):\mathcal{X}^n \to \mathcal{X}^m$ be a (possibly randomized) \textit{monotone adversary} that has knowledge of $\mathcal{D}$, $\mathcal{H}$ and the target hypothesis $h^\star$. We denote by $\adv_{\mathcal{D}, h^\star, \mathcal{A}}(n, m)$ the output of the following random process: first, $x_1, \dots , x_{n} \sim \mathcal{D} $ are drawn i.i.d.\ from $\mathcal{D}$: these are the \textit{clean} samples. The adversary then takes $x_1, \dots x_{ n}$ as input and generates $\mathcal{A}(x_1,\dots,x_n)=\tilde{x}_1,\dots,\tilde{x}_m$: these are the \textit{monotone corrupted} samples. The final output is a uniformly random permutation of $((x_1, h^\star(x_1)),\dots,(x_n,h^\star(x_n)), (\tilde{x}_1,h^\star(\tilde{x}_1)),\dots,(\tilde{x}_m, h^\star(\tilde{x}_m)))$.
\end{definition}

Given as input a sample $S \sim \adv_{\mathcal{D}, h^\star, \mathcal{A}}(n, m)$, the aim of a learner is to output a hypothesis $h$ that minimizes the expected error on a new test point drawn from $\mathcal{D}$, i.e.,
\begin{align}
    \err_{\mathcal{D}, h^\star}(h_S) := \Ex_{x \sim \mathcal{D}}[\Ind[h(x) \neq h^{\star}(x)]] = \Pr_{x \sim \mathcal{D}}[h(x) \neq h^{\star}(x)]. 
\end{align}

We note some salient features of this model. First, the entire dataset comprises of only \textit{honest labels}. %
Second, even if the adversary can compose the corrupted points as a fully adaptive function of the clean points, the benchmark for the learning algorithm is its error on a new test point drawn at random from the representative data distribution; in this sense, the test point is not in control of the adversary. 
Third, the introduction of the honestly labeled corrupted points nevertheless completely breaks independence and exchangeability in the data. 
In particular, if we condition on the corrupted points, the distribution of the clean points is no longer i.i.d.\ from the representative distribution.

\subsubsection{Oblivious Monotone Adversary}
\label{sec:oblivious}

We also consider a more benign version of the monotone adversary that still has complete knowledge of $h^\star, \mathcal{H}$ and $\mathcal{D}$, but cannot look at the clean points while preparing the corrupted points---we term this adversary an \textit{oblivious} monotone adversary, and will explicitly qualify it thus to distinguish it from the more powerful adaptive adversary described above.

\begin{definition}[Oblivious Monotone Adversary] 
    \label{def:oblivious-monotone-adversary}
    Let $\mathcal{D}$ be a distribution over $\mathcal{X}$. Let $\mathcal{H}$ be a hypothesis class known to the learner, and let $h^\star \in \mathcal{H}$ be the target hypothesis. For $m \ge 0$, let $\mathcal{A}=\mathcal{A}(\mathcal{D}, h^\star): \emptyset \to \mathcal{X}^m$ be a (possibly randomized) \textit{oblivious monotone adversary} that has knowledge of $\mathcal{D}$, $\mathcal{H}$ and the target hypothesis $h^\star$. We denote by $\obl_{\mathcal{D}, h^\star, \mathcal{A}}(n, m)$ the output of the following random process: first, $x_1, \dots , x_{n} \sim \mathcal{D} $ are drawn i.i.d.\ from $\mathcal{D}$. The adversary $\mathcal{A}$ generates $\tilde{x}_1,\dots,\tilde{x}_m$ independently without seeing $x_1, \dots , x_{n} $. The final output is a uniformly random permutation of $((x_1, h^\star(x_1)),\dots,(x_n,h^\star(x_n)), (\tilde{x}_1,h^\star(\tilde{x}_1)),\dots,(\tilde{x}_m, h^\star(\tilde{x}_m)))$.
\end{definition}

\subsection{Learning Algorithms}
\label{sec:learners}

The most natural learning algorithm is an Empirical Risk Minimizer.
\begin{definition}[Empirical Risk Minimizer]
    \label{def:erm}
    An Empirical Risk Minimizer (ERM) is a learning algorithm that takes as input a training set $S = ((x_1,h^\star(x_1)),\dots,(x_N,h^\star(x_N)))$ and produces a hypothesis $h_S \in \mathcal{H}$ with $h_S(x_i)=h^\star(x_i)$ for all $(x_i, h^\star(x_i)) \in S$.
\end{definition}

The following is a textbook result for the ERM algorithm.

\begin{theorem}[ERM Sample Complexity (Chapter 5.6 in \cite{blum2020foundations})]
    \label{thm:erm-sample-complexity}
    Let $\mathcal{H}$ be any binary hypothesis class over $\mathcal{X}$ having VC dimension $d$. Let $\mathcal{D}$ be any distribution over $\mathcal{X}$, and let $h^\star \in \mathcal{H}$ be the target hypothesis. With probability at least $1-\delta$ over the draw of $S=((x_1,y_1),\dots,(x_N, y_N))$, where $x_1,\dots,x_N$ are drawn i.i.d.\ from $\mathcal{D}$ and $y_i=h^\star(x_i)$ for every $i$, it holds that every $h_S \in \mathcal{H}$ that is an ERM with respect to $S$ satisfies
    \begin{align}
        \err_{\mathcal{D}, h^\star}(h_S) \le 100\frac{d\log(N/d)+\log(1/\delta)}{N}.
    \end{align}
\end{theorem}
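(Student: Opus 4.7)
The plan is to prove the standard realizable PAC bound via the classical double-sample/symmetrization argument combined with the Sauer--Shelah lemma. Set $\varepsilon = 100\frac{d\log(N/d)+\log(1/\delta)}{N}$ and let $B=\{h \in \mathcal{H} : \err_{\mathcal{D},h^\star}(h)>\varepsilon\}$ be the set of ``bad'' hypotheses. Since any ERM $h_S$ is consistent with $S$, it suffices to show that the probability that there exists some $h\in B$ consistent with $S$ is at most $\delta$.

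First I would perform the ghost-sample reduction. Draw an independent ``ghost'' sample $S' = ((x'_1,h^\star(x'_1)),\dots,(x'_N,h^\star(x'_N)))$ of the same size $N$. Provided $N\varepsilon \geq 8$, a Chernoff/Chebyshev argument shows that if some fixed $h\in B$ exists, then with probability at least $1/2$ (over $S'$) it makes at least $N\varepsilon/2$ mistakes on $S'$. Hence
\begin{align}
\Pr_{S}[\exists h \in B \text{ consistent with } S] \;\leq\; 2\Pr_{S,S'}\!\left[\exists h\in B:\; h\text{ consistent with }S\text{ and }h\text{ errs on at least }N\varepsilon/2\text{ points of }S'\right].
\end{align}

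Next I would apply symmetrization. View $S\cup S'$ as $2N$ fixed points and re-randomize which points land in $S$ versus $S'$ by performing an independent fair swap within each pair $(x_i,x'_i)$. Conditional on the multiset $S\cup S'$, by Sauer--Shelah the class $\mathcal{H}$ induces at most $\binom{2N}{\leq d}\leq (2eN/d)^d$ distinct labelings on these points. For any fixed such labeling that disagrees with $h^\star$ on $k \geq N\varepsilon/2$ of the $2N$ points, the probability (over the random swaps) that all $k$ disagreements land in $S'$ is at most $2^{-k} \leq 2^{-N\varepsilon/2}$. A union bound over the $(2eN/d)^d$ labelings yields
\begin{align}
\Pr_{S}[\exists h \in B \text{ consistent with } S] \;\leq\; 2\Bigl(\tfrac{2eN}{d}\Bigr)^{d} 2^{-N\varepsilon/2}.
\end{align}

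Finally I would choose the constant in $\varepsilon$ so that this is at most $\delta$. Taking logarithms, it suffices that $N\varepsilon/2 \geq d\log_2(2eN/d) + \log_2(2/\delta)$, and an elementary manipulation shows the constant $100$ in the statement comfortably absorbs the $2/\ln 2$ factor and the $\log(2e)$ slack, after also verifying the mild condition $N\varepsilon \geq 8$ (which holds trivially when the right-hand side of the bound exceeds $1$, in which case the conclusion is vacuous anyway). The main ``obstacle'' is not really conceptual but bookkeeping: one must carefully check that a single clean constant (here $100$) simultaneously handles the $1/\ln 2$ conversion, the $\log(2e)$ term from Sauer--Shelah, and the factor of $2$ from the ghost-sample reduction across all regimes of $d,N,\delta$. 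Everything else is entirely standard.
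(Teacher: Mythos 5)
Your proof is correct and is essentially the canonical argument: the paper does not prove this theorem itself but cites it from Chapter 5.6 of \cite{blum2020foundations}, whose proof is exactly the ghost-sample reduction, the pairwise-swap symmetrization, and the Sauer--Shelah growth-function union bound that you describe. The only caveat is inherited from the statement rather than from your argument: when $N$ is close to $d$ the factor $\log(N/d)$ degenerates to $0$ and the claimed bound (and your final constant-checking step) only makes sense under the usual convention that $N \ge cd$ or that the logarithm is bounded below by a constant, which is how the paper applies it.
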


A majority voter uses ERM as a base learning algorithm in order to output a majority vote.

\begin{definition}[Majority Voter]
    \label{def:majority-voter}
    A majority voter is a learning algorithm that takes as input an ERM algorithm and a training set $S = ((x_1,h^\star(x_1)),\dots,(x_N,h^\star(x_N)))$. As a function of $N$ alone, it then produces a list $L=L_1,\dots,L_k$, where each $L_i$ is a sequence of indices $\ell^i_1,\dots,\ell^i_{t_i} \in \{1,\dots,N\}$. It then constructs the training sets $S_i = ((x_{\ell^i_1}, h^\star(x_{\ell^i_1})),\dots,(x_{\ell^i_{t_i}}, h^\star(x_{\ell^i_{t_i}})))$ corresponding to the indices in $L_i$. Finally, it runs the ERM algorithm on each $S_i$ to produce hypotheses $h_1,\dots,h_k$ and outputs the final classifier $h(x) = \maj(h_1(x),\dots,h_k(x))$ where $\maj(\cdot)$ denotes a majority vote. We say that the majority voter uses sub-samples of size $t$ if each list $L_i$ has at least $t$ distinct indices.
\end{definition}

We observe that previous optimal learners given by \cite{hanneke2016optimal}, Bagging \citep{larsen2023bagging} as well as Majority-of-Three \citep{aden2024majority} all fall in the category of majority voters, with Hanneke's algorithm having $k=N^{\log_4 3}$ and sub-samples of size at least $N/2$, Bagging having $k=O(\log(N/\delta))$ and sub-samples of size $\Omega(N)$ (with probability $1-\exp(-\Omega(N))$), and Majority-of-Three having $k=3$ and sub-samples of size $N/3$. We also note that the analysis of previous majority voters crucially needs sub-samples of size $\Omega(N)$.

We will also discuss a qualitatively different learning algorithm, known as the One-inclusion Graph algorithm.
\begin{definition}[One-inclusion Graph \citep{alon1987partitioning,haussler1994predicting}]
    \label{def:oig}
    The One-inclusion Graph (OIG) algorithm takes as input a training set $S = ((x_1,h^\star(x_1)),\dots,(x_N,h^\star(x_N)))$, and outputs a hypothesis $h$ (not necessarily in $\mathcal{H}$). For any $x \in \mathcal{X}$, $h(x)$ is obtained as follows. First, the algorithm constructs a graph $\mathcal{G}$, whose vertex set $V$ is the set of projections (distinct labelings) by members of $\mathcal{H}$ onto $(x_1,\dots,x_N,x)$, so that every vertex can be identified by a unique pattern in $\{0,1\}^{N+1}$. A vertex $u$ connects to vertex $v$ by an edge in the ``direction'' $i \in [N+1]$ if  $u_{i} \neq v_i$, and $u_j = v_j$ for every $j \neq i$. An orientation $\sigma$ of the edges in $\mathcal{G}$ maps every edge $e$ in the graph to one of the two vertices it is connected to. The out-degree of a vertex $u$ in the orientation $\sigma$ is the number of edges connected to it which $\sigma$ maps to the other end-point of the edge (which can be at most $N+1$, one for every direction $i \in [N+1]$). The OIG algorithm constructs any orientation $\sigma$ of $\mathcal{G}$ which minimizes the largest out-degree of any vertex in the graph\footnote{There may be multiple orientations that minimize the out-degree; it suffices to consider any of these. In this sense, the one-inclusion graph algorithm is really a \textit{class} of algorithms.}. Consider potentially the two vertices of the form $(h^\star(x_1),\dots,h^\star(x_N),0)$ (the ``0 vertex'') and $(h^\star(x_1),\dots,h^\star(x_N),1)$ (the ``1 vertex'') in $\mathcal{G}$ (at least one of these vertices exists because $h^\star \in \mathcal{H}$). If only the 0 vertex exists, set $h(x)=0$; otherwise, if only the 1 vertex exists, set $h(x)=1$. Otherwise, if both the vertices exist, consider the edge $e$ in direction $N+1$ connecting them. Set $h(x)=0$ if $\sigma$ orients $e$ towards the 0 vertex, and $h(x)=1$ if it orients it towards the 1 vertex.
\end{definition}

The following is a well-known structural result about orientations for one-inclusion graphs that are induced by hypothesis classes of VC dimension $d$.

\begin{theorem}[Bounded out-degree of OIG \citep{haussler1994predicting}]
    \label{thm:oig-outdegree}
    Let $\mathcal{H}$ be a binary hypothesis class over $\mathcal{X}$ having VC dimension $d$. Then, for any $n \ge 1$ and $S=(x_1,\dots,x_n)$, there exists an orientation of the one-inclusion graph of the projection of $\mathcal{H}$ onto $S$, such that the out-degree of every vertex in the orientation is at most $d$.
\end{theorem}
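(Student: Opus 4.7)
My plan is to prove the theorem in two conceptually distinct steps: first, to establish a density bound asserting that the one-inclusion graph of any VC-dimension-$d$ class has at most $d$ times as many edges as vertices, uniformly over all induced subgraphs; then, to convert this density bound into the desired orientation via a bipartite-matching argument.

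For the density bound, I would introduce the standard down-shift operator on subsets of the Boolean cube. For a fixed coordinate $i$, define $\sigma_i(V)$ by mapping each $v \in V$ with $v_i = 1$ to $v - e_i$ whenever $v - e_i \notin V$, and leaving all other elements fixed. A short calculation shows that $\sigma_i$ is a bijection, so $|\sigma_i(V)| = |V|$; a second, more delicate case analysis shows that $\sigma_i$ does not increase VC dimension; and a third classical extremal-set-theory fact shows that shifting does not decrease the number of edges of the one-inclusion graph. Iterating these shifts across every coordinate until a fixed point is reached produces a \emph{down-closed} family $V^\star$ of the same cardinality as $V$, with VC dimension at most $d$, and at least as many one-inclusion edges as $V$. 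For a down-closed family, any element $v$ with support $T$ forces all $2^{|T|}$ indicators of subsets of $T$ to lie in the family, so $V^\star$ shatters $T$; hence every $v \in V^\star$ has Hamming weight at most $d$. Edges in the one-inclusion graph of a down-closed family are in bijection with pairs $(v, i)$ satisfying $v \in V^\star$ and $v_i = 1$ (the neighbor $v - e_i$ is automatically present by down-closure), so
\begin{equation*}
|E(V)| \;\le\; |E(V^\star)| \;=\; \sum_{v \in V^\star} \mathrm{wt}(v) \;\le\; d\,|V^\star| \;=\; d\,|V|.
\end{equation*}
Since VC dimension is monotone under taking subsets, the same inequality holds with $V$ replaced by any $V' \subseteq V$, and hence for every induced subgraph of the one-inclusion graph.

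To convert this uniform density bound into the desired orientation, I would invoke a standard bipartite-matching reformulation of graph orientability. Form a bipartite auxiliary graph $B$ with left vertex set $E(G)$, right vertex set $V(G)$, and, for each $e \in E(G)$, edges in $B$ from $e$ to each of its two endpoints in $G$. A valid orientation of $G$ with maximum out-degree at most $d$ corresponds exactly to selecting, for each $e \in E(G)$, one of its endpoints so that no endpoint is selected more than $d$ times --- equivalently, to a left-saturating matching in $B$ in which each right vertex has capacity $d$. The capacitated version of Hall's theorem says such a matching exists iff $|F| \le d \cdot |N_B(F)|$ for every $F \subseteq E(G)$, and this is exactly the density bound applied to the induced subgraph of $G$ on the vertex set $N_B(F)$, which contains all edges in $F$.

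The principal obstacle in executing this plan is verifying the two key invariants of the shift operator: non-increase of VC dimension and non-decrease of edge count. The VC-dimension step requires a careful case analysis when the shifted coordinate $i$ lies in a set $T$ one wishes to shatter in $V$: for patterns with $p_i = 0$ realized in $\sigma_i(V)$ only via a shifted element, one recovers a realizer in $V$ by using the companion pattern with $p_i = 1$, which by definition of the shift cannot itself be a shifted element and whose $i$-neighbor therefore lies in $V$. Once these two invariants are established, the remainder of the argument is routine bookkeeping.
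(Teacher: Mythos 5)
This theorem is quoted in the paper from \cite{haussler1994predicting} without proof, so there is no in-paper argument to compare against; judged on its own, your proposal is correct and follows what is essentially the classical route. The two-step structure is exactly the standard one: (i) the edge-density lemma, that every subfamily $V'\subseteq\{0,1\}^n$ of VC dimension at most $d$ spans at most $d\,|V'|$ one-inclusion edges, and (ii) the conversion of a hereditary density bound into a bounded-out-degree orientation via capacitated Hall (blow each vertex up into $d$ copies; the Hall condition $|F|\le d\,|N_B(F)|$ is exactly the density bound on the subgraph induced by the endpoints of $F$). Your proof of (i) by down-shifting differs from Haussler--Littlestone--Warmuth's original double induction on $n$ and $d$, but it is a well-known alternative, and your treatment of the key points is sound: the shift is injective, the fixed point is down-closed so every element's support is shattered and hence has weight at most $d$, and your sketch of why shifting cannot increase VC dimension (recovering a realizer of a pattern with $p_i=0$ from the unmoved realizer of its companion pattern with $p_i=1$, whose $i$-neighbor must lie in $V$) is the right case analysis. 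The one place where you lean on citation rather than argument is the claim that shifting does not decrease the number of one-inclusion edges; this is true and classical, but it is the most delicate invariant and deserves the explicit injection on edges: direction-$i$ edges have both endpoints unmoved; a direction-$j$ edge ($j\ne i$) at level $v_i=1$ maps to itself if both endpoints are unmoved and to its downward translate otherwise, and no collision occurs because an image edge at level $0$ that was already an edge of $V$ forces both lifted endpoints to be unmoved. With that lemma filled in, the argument is complete.
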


\section{Upper Bounds}
\label{sec:upper-bounds}

First, we observe that the optimal expected error that a learning algorithm can achieve upon receiving a training dataset drawn from $\adv_{\mathcal{D}, h^\star, \mathcal{A}}(n,m)$, where $h^\star$ belongs to a hypothesis class $\mathcal{H}$ having VC dimension $d$, is $O(d/n)$. This follows from the $\Omega(d/n)$ lower bound on the expected error that any learning algorithm must suffer, even in the case when there are no corruptions, and the input solely comprises of $n$ i.i.d.\ draws from $\mathcal{D}$ \citep{ehrenfeucht1989general}.

The following result shows that the ERM algorithm achieves an expected error rate of $O(d\log(n/d)/n)$, even in the presence of a monotone adversary.

\begin{theorem}[Monotone Adversary ERM Upper Bound]
    \label{thm:monotone-adversary-erm-sample-complexity}
    Let $\mathcal{H}$ be any binary hypothesis class over $\mathcal{X}$ having VC dimension $d$. Let $\mathcal{D}$ be any distribution over $\mathcal{X}$, and let $h^\star \in \mathcal{H}$ be the target hypothesis. Let $\mathcal{A}$ be any monotone adversary. With probability at least $1-\delta$ over the draw of $S \sim \adv_{\mathcal{D}, h^\star, \mathcal{A}}(n, m)$, it holds that every $h_S \in \mathcal{H}$ that is an ERM with respect to $S$ satisfies
    \begin{align}
        \err_{\mathcal{D}, h^\star}(h_S) \le 100\frac{d\log(n/d)+\log(1/\delta)}{n}.
    \end{align}
\end{theorem}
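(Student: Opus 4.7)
My plan is to reduce the monotone-adversary ERM bound directly to the standard realizable i.i.d.\ ERM guarantee (\Cref{thm:erm-sample-complexity}) applied to just the $n$ clean points. The entire content of the argument is the observation that the adversary, by assumption, only \emph{adds} correctly labeled constraints to the training set; it cannot make $h^\star$ inconsistent with $S$, so every ERM on $S$ remains consistent with the clean points.

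More concretely, I would first isolate the clean sub-sample $S_{\mathrm{clean}} = ((x_1, h^\star(x_1)), \dots, (x_n, h^\star(x_n)))$, whose marginal distribution is exactly $\mathcal{D}^n$ with labels given by $h^\star$, independent of the adversary's behavior (the adversary looks at the $x_i$'s but does not change them). Any $h_S \in \mathcal{H}$ that is an ERM on the full training set $S$ satisfies $h_S(x_i) = h^\star(x_i)$ for every $i \in \{1,\dots,n\}$, because $h^\star \in \mathcal{H}$ itself attains zero empirical error on $S$ and hence any ERM must too. Thus the (random) set of possible ERM outputs on $S$ is contained in the set
\[
  \mathcal{H}_{\mathrm{consistent}}(S_{\mathrm{clean}}) \;=\; \{h \in \mathcal{H} : h(x_i) = h^\star(x_i) \text{ for all } i \in [n]\}.
\]

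Next, I would apply \Cref{thm:erm-sample-complexity} with sample size $N = n$ to the clean sub-sample. That theorem states that with probability at least $1-\delta$ over the i.i.d.\ draw $x_1,\dots,x_n \sim \mathcal{D}$, every $h \in \mathcal{H}_{\mathrm{consistent}}(S_{\mathrm{clean}})$ satisfies $\err_{\mathcal{D}, h^\star}(h) \le 100(d\log(n/d) + \log(1/\delta))/n$. Combining this with the containment above immediately yields that, on the same high-probability event, every ERM $h_S$ on $S$ satisfies the desired bound. The adversary's randomness and the random permutation of $S$ do not interact with this high-probability event, since it depends only on the draw of $x_1,\dots,x_n$; so the bound holds with probability at least $1-\delta$ over the joint randomness of $S \sim \adv_{\mathcal{D}, h^\star, \mathcal{A}}(n, m)$.

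There is essentially no technical obstacle here; the entire point is conceptual. The only thing worth double-checking is that the realizable ERM bound is genuinely a statement of the form \emph{``every $h$ consistent with the clean sample has small population error''}—which is exactly how uniform convergence for VC classes in the realizable case is proved (via the standard double-sample/growth-function argument)—so the quantifier ``every ERM on $S$'' falls out for free regardless of how the adversary correlated the corrupted points with the clean ones. This is exactly the asymmetry highlighted in the introduction: adversarial additions can tighten the feasible set of ERMs but can never expand it past $\mathcal{H}_{\mathrm{consistent}}(S_{\mathrm{clean}})$, so ERM inherits the clean-sample uniform convergence bound verbatim.
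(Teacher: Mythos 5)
Your proposal is correct and follows essentially the same route as the paper: observe that every ERM on the corrupted set $S$ remains consistent with the clean sub-sample (since the adversary's points are labeled by $h^\star$), and then invoke the standard realizable uniform-convergence bound (\Cref{thm:erm-sample-complexity}) with $N=n$ on the clean points, whose marginal law is $\mathcal{D}^n$ regardless of the adversary. The only cosmetic difference is that the paper makes the bookkeeping explicit by conditioning on the random permutation $\Pi$, whereas you note directly that the failure event is measurable with respect to the clean draw alone; both are fine.
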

\begin{proof}
    Let $\Pi$ denote the uniformly random permutation that acts on the clean+corrupted data before being fed to the learner as input. For a sample $S \sim \adv_{\mathcal{D}, h^\star, \mathcal{A}}(n, m)$, upon conditioning on $\Pi$, the clean data points exist as the data points $S' = (x_{i_1},y_{i_1}),\dots,(x_{i_n}, y_{i_n})$ in $S$, for some fixed distinct indices $i_1,\dots,i_n$, and the corrupted dataset $S''$ corresponds to $S \setminus S'$. Furthermore, under this conditioning, $x_{i_1},\dots,x_{i_n}$ are i.i.d.\ draws from $\mathcal{D}$, with $y_{i_j}=h^\star(x_{i_j})$ for every $j \in [n]$ (note that we are not conditioning on the corrupted data points). Now, observe that any ERM $h_S$ with respect to all of $S$ is also an ERM with respect to $S'$. This is because the labels on the corrupted points are still given by $h^\star$, and hence any ERM with respect to $S$ must label all points in $S$ according to $h^\star$, including $S'$. Letting $\eps := 100\frac{d\log(n/d)+\log(1/\delta)}{n}$, we thus have that
    \begin{align*}
        &\Pr_{S \sim \adv_{\mathcal{D}, h^\star, \mathcal{A}}(n, m)}\left[\text{$\exists$ ERM $h_S$ s.t. } \err_{\mathcal{D}, h^\star}(h_S) > \varepsilon\right] = \Ex_{\Pi}\left[\Pr_{S', S''}\left[\text{$\exists$ ERM $h_S$ s.t. } \err_{\mathcal{D}, h^\star}(h_S) > \varepsilon ~\Big|~ \Pi\right]\right] \\
        &\le \Ex_{\Pi}\left[\Pr_{S', S''}\left[\text{$\exists$ ERM $h_{S'}$ s.t. } \err_{\mathcal{D}, h^\star}(h_{S'}) > \varepsilon ~\Big|~ \Pi\right]\right] = \Ex_{\Pi}\left[\Pr_{S'}\left[\text{$\exists$ ERM $h_{S'}$ s.t. } \err_{\mathcal{D}, h^\star}(h_{S'}) > \varepsilon ~\Big|~ \Pi\right]\right] \\
        &\le \delta.
    \end{align*}
    The first inequality follows from our above reasoning, which implies that if there exists an ERM with respect to $S$ that has large error, then there exists an ERM with respect to $S'$ that has large error. The concluding inequality follows from \Cref{thm:erm-sample-complexity}, together with the fact that the conditional distribution of data points in $S'$ is i.i.d.\ from $\mathcal{D}$ and labeled by $h^\star$.
\end{proof}

Note that the expected error rate in \Cref{thm:monotone-adversary-erm-sample-complexity} is optimal for ERM, even in the weaker case of oblivious adversaries. In particular, there exist hypothesis classes, where for any ERM, there exists a data distribution $\mathcal{D}$ for which the ERM, trained on $n$ i.i.d.\ samples from $\mathcal{D}$ \textit{without} any corruptions, suffers expected error $\Omega\left(\frac{d\log(n/d)}{n}\right)$ \citep{haussler1994predicting,auer2007new}. 

Nevertheless, for oblivious adversaries, it is possible to get an improved error rate using different algorithms; in particular, the following theorem shows that the OIG algorithm gets an expected error of $O(d/n)$. This is essentially because the corrupted points added by an oblivious adversary do not depend on the clean points, and hence the distribution of the clean points is i.i.d.\ from $\mathcal{D}$, even when we \textit{condition} on the corrupted points. Note that this is not at all the case in the setting of an adaptive adversary that adds corrupted points as a function of the clean points.

\begin{theorem}[Oblivious Monotone Adversary OIG Upper Bound]
    \label{thm:oblivious-monotone-adversary-oig-sample-complexity}
    Let $\mathcal{H}$ be any binary hypothesis class over $\mathcal{X}$ having VC dimension $d$. Let $\mathcal{D}$ be any distribution over $\mathcal{X}$, and let $h^\star \in \mathcal{H}$ be the target hypothesis. Let $\mathcal{A}$ be any oblivious monotone adversary. Then, for $S \sim \obl_{\mathcal{D}, h^\star, \mathcal{A}}(n, m)$, the hypothesis $h_S$ ouput by the OIG algorithm on $S$ satisfies
    \begin{align}
        \Ex_{S \sim \obl_{\mathcal{D}, h^\star, \mathcal{A}}(n, m)}\left[\err_{\mathcal{D}, h^\star}(h_S)\right] \le \frac{d}{n+1}.
    \end{align}
\end{theorem}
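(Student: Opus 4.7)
The plan is to carry out the standard leave-one-out/symmetrization analysis of the OIG algorithm from \cite{haussler1994predicting}, with the crucial observation that obliviousness of the adversary preserves the exchangeability between the $n$ clean training samples and the test point.

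First, by definition of the oblivious monotone adversary, the corrupted samples $\tilde{x}_1,\dots,\tilde{x}_m$ are generated independently of the clean samples $x_1,\dots,x_n$, and the test point $x \sim \mathcal{D}$ is independent of everything else. Thus, conditional on $(\tilde{x}_1,\dots,\tilde{x}_m)$, the $n+1$ points $(x_1,\dots,x_n,x)$ are i.i.d.\ from $\mathcal{D}$, and in particular exchangeable. I would then apply the usual symmetrization: draw $n+1$ i.i.d.\ samples $x_1,\dots,x_{n+1}$ from $\mathcal{D}$, designate a uniformly random index $I \in [n+1]$ as the test point, and reduce the expected error to an average over this choice of $I$.

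Next, fix a realization of $(x_1,\dots,x_{n+1},\tilde{x}_1,\dots,\tilde{x}_m)$. The one-inclusion graph $\mathcal{G}$ constructed by OIG depends only on the multiset of $n+m+1$ points together with the projection of $\mathcal{H}$ onto them; in particular, it is the same regardless of which $x_i$ is designated the test point. I would take the orientation $\sigma$ on $\mathcal{G}$ to be a fixed function of the graph alone (the standard convention when the minimizing orientation is not unique); \Cref{thm:oig-outdegree} then guarantees that every vertex has out-degree at most $d$ in $\sigma$. Let $v^\star \in \{0,1\}^{n+m+1}$ denote the vertex whose coordinates are $h^\star$ applied to the $n+m+1$ points. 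When $x_i$ is designated the test point, the OIG algorithm errs precisely when the edge $e_i$ from $v^\star$ in direction $i$ exists and is oriented by $\sigma$ away from $v^\star$. Hence $\sum_{i=1}^{n+1}\Ind[\text{OIG errs on } x_i] \le \outd(v^\star) \le d$, so averaging over the uniform $I \in [n+1]$ yields a conditional expected error of at most $d/(n+1)$. Taking expectation over the draw of $x_1,\dots,x_{n+1}$ and $\tilde{x}_1,\dots,\tilde{x}_m$ then completes the proof.

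The main (and essentially only) subtlety is the invariance of the OIG orientation across the $n+1$ symmetric scenarios, which I would justify by fixing the orientation-choice rule to depend only on the graph rather than on which coordinate is labeled as the test. This is also precisely where the analysis for an adaptive monotone adversary breaks down: adaptive corruptions are correlated with the clean samples, destroying the exchangeability invoked in the symmetrization step and preventing us from averaging the out-degree bound over the $n+1$ clean positions.
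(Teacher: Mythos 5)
Your proposal is correct and follows essentially the same route as the paper's proof: condition on the oblivious corruptions (and the shuffle), exploit exchangeability of the $n$ clean points with the test point, note that the one-inclusion graph on the combined $n+m+1$ points is identical across the $n+1$ leave-one-out scenarios, and bound the number of errors by the out-degree of the vertex corresponding to $h^\star$, which is at most $d$ by \Cref{thm:oig-outdegree}. The orientation-invariance subtlety you flag is handled the same way (implicitly) in the paper, so there is no substantive difference.
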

\begin{proof}
    Let us again condition on the uniformly random permutation $\Pi$ that acts on the clean+corrupted data before it is given to the learner. For a sample $S \sim \obl_{\mathcal{D}, h^\star, \mathcal{A}}(n, m)$, upon conditioning on $\Pi$, the clean data points exist as the data points $S' = (x_{i_1},y_{i_1}),\dots,(x_{i_n}, y_{i_n})$ in $S$, for some fixed distinct indices $i_1,\dots,i_n$, and the corrupted dataset $S''$ corresponds to $S \setminus S'$. Let us further condition on the corrupted points in $S''$. Since $S''$ was specified by an oblivious adversary, it is independent of $S'$. Thus, conditioned on both $\Pi$ and $S''$, the distribution of $S'$ is that of $n$ i.i.d.\ draws from $\mathcal{D}$ labeled by $h^\star$. We have that
    \begin{align*}
        \Ex_{S \sim \obl_{\mathcal{D}, h^\star, \mathcal{A}}(n, m)}\left[\err_{\mathcal{D}, h^\star}(h_S)\right] = 
        \Ex_{\Pi, S''}\left[\Ex_{S'}\left[\err_{\mathcal{D}, h^\star}(h_S) ~\Big|~ \Pi, S''\right]\right].
    \end{align*}
    By our reasoning above, and writing out the definition of $\err_{\mathcal{D}, h^\star}(h_S)$, the inner expectation is equal to
    \begin{align*}
        \Ex_{S'}\left[\err_{\mathcal{D}, h^\star}(h_S) ~\Big|~ \Pi, S''\right] %
        &= \Ex_{\substack{x_{i_1},\dots,x_{i_n} \sim \mathcal{D}^n \\ S' = ((x_{i_1}, h^\star(x_{i_1})),\dots,(x_{i_n}, h^\star(x_{i_n})))}}\left[\Ex_{x \sim \mathcal{D}}\left[ \Ind[h_S(x) \neq h^\star(x)]~\Big|~\Pi, S''\right]\right]. 
    \end{align*}
    Now, let $i_{n+1}$ stand for a separate index, e.g., $i_{n+1}=n+m+1$. Let $j \sim [n+1]$ be a uniformly random index drawn from $[n+1]$. Then, recalling that $S=(S', S'')$ and using exhangeability, we have that the expectation above is equal to
    \begin{align*}
        &\Ex_{\substack{x_{i_1},\dots,x_{i_{n+1}} \sim \mathcal{D}^{n+1} \\ \bar{S} = ((x_{i_1}, h^\star(x_{i_1})),\dots,(x_{i_{n+1}}, h^\star(x_{i_{n+1}})))}}\left[\Ex_{j \sim [n+1]}\left[ \Ind[h_{(\bar{S}_{-i_j}, S'')}(x_{i_j}) \neq h^\star(x_{i_j})]~\Big|~\Pi, S''\right]\right] \\
        &= \frac{1}{n+1} \Ex_{\substack{x_{i_1},\dots,x_{i_{n+1}} \sim \mathcal{D}^{n+1} \\ \bar{S} = ((x_{i_1}, h^\star(x_{i_1})),\dots,(x_{i_{n+1}}, h^\star(x_{i_{n+1}})))}}\left[\sum_{j=1}^{n+1} \Ind[h_{(\bar{S}_{-i_j}, S'')}(x_{i_j}) \neq h^\star(x_{i_j})]~\Big|~\Pi, S''\right].
    \end{align*}
    In the above, $\bar{S}_{-i_j}$ is shorthand for $\bar{S} \setminus (x_{i_j}, h^\star(x_{i_j}))$. We will argue that for every fixed $\bar{S}$, the summation inside the expectation is at most $d$, which will give us the desired result. Let $S''=(x_{\ell_1},\dots,x_{\ell_m})$, and observe that conditioned on $\Pi$ and $S''$, we can further upper bound the summation as
    \begin{align*}
        \sum_{j=1}^{n+1} \Ind[h_{(\bar{S}_{-i_j}, S'')}(x_{i_j}) \neq h^\star(x_{i_j})] \le \sum_{j=1}^{n+1} \Ind[h_{(\bar{S}_{-i_j}, S'')}(x_{i_j}) \neq h^\star(x_{i_j})] + \sum_{j =1}^{m}\Ind[h_{(\bar{S}, S''_{-\ell_j})}(x_{\ell_j}) \neq h^\star(x_{\ell_j})].
    \end{align*}
    But now observe that in order to make a prediction in any of the summations above, the OIG algorithm constructs the same graph, because the combined set of training+test points is the same, namely $(\bar{S}, S'')$. Furthermore, by definition of the OIG algorithm which predicts on a point according to the orientation of the edge in the direction of that point, the summation above is precisely equal to the out-degree of the vertex that is the projection of $h^\star$ in this graph. From \Cref{thm:oig-outdegree}, we know that the out-degree of every vertex in the orientation that the algorithm constructs is at most $d$. This concludes the proof.
\end{proof}

We remark that the guarantee above holds for any one-inclusion graph algorithm, i.e., no matter how it orients the one-inclusion graph, so long as the orientation minimizes the max out-degree.

\section{Lower Bounds}
\label{sec:lower-bounds}

In this section, we present our lower bounds which show how optimal learning strategies in the standard i.i.d.\ setting may become suboptimal with monotone adversarial corruptions. We will require using the following auxiliary lemma that is based on the coupon collector problem. For completeness, we give a proof in \Cref{sec:aux}.
\begin{lemma}[Coupon Collector]
    \label{lem:coupon}
    Consider a set of $r = c n/\log(n/d)$ elements $x_1,\dots,x_r$, where $c>0$ is a sufficiently large constant, $n \geq cd$ and $d \geq 1$. Then for a set $S$ of $n$ i.i.d.\ uniform samples from $x_1,\dots,x_r$, it holds with probability at least $1/2$ that there are at least $d$ elements $x_i$ not in $S$.
\end{lemma}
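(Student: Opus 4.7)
The plan is to apply the second moment method to the number $M$ of elements that are never sampled. Writing $M = \sum_{i=1}^{r} Z_i$ with $Z_i = \Ind[x_i \notin S]$, I want to establish (i) that $\Ex[M]$ is comfortably larger than $\max(8, 2d)$, and (ii) that $\mathrm{Var}(M) \leq \Ex[M]$. Chebyshev's inequality then gives $\Pr[M \leq \Ex[M]/2] \leq 4/\Ex[M] \leq 1/2$, and since $\Ex[M]/2 \geq d$, this yields $\Pr[M \geq d] \geq 1/2$, as desired.

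For step (i), I would begin from the identity $\Ex[M] = r(1 - 1/r)^n$ and apply the bound $(1-1/r)^n \geq e^{-2n/r}$, valid whenever $r \geq 2$ (which holds because $r = cn/\log(n/d)$ with $n \geq cd$ and $c$ large). Substituting $n/r = \log(n/d)/c$ gives
\begin{align*}
\Ex[M] \;\geq\; \frac{cn}{\log(n/d)} \cdot \left(\frac{d}{n}\right)^{2/c} \;=\; c \cdot \frac{(n/d)^{1 - 2/c}}{\log(n/d)} \cdot d.
\end{align*}
Since $n/d \geq c$ by assumption and the function $t \mapsto c\, t^{1-2/c}/\log t$ is eventually increasing and tends to infinity, choosing $c$ a sufficiently large absolute constant (e.g.\ $c = 16$) forces the right-hand side to exceed both $2d$ and $8$ uniformly over $d \geq 1$ and $n \geq cd$.

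For step (ii), I would exploit negative correlation among the $Z_i$. Direct computation gives $\Ex[Z_i Z_j] = (1 - 2/r)^n$ and $\Ex[Z_i]\Ex[Z_j] = (1-1/r)^{2n}$ for $i \neq j$; the elementary inequality $1 - 2/r \leq (1-1/r)^2$ shows $\mathrm{Cov}(Z_i, Z_j) \leq 0$. Consequently $\mathrm{Var}(M) \leq \sum_i \mathrm{Var}(Z_i) \leq \sum_i \Ex[Z_i] = \Ex[M]$. Plugging into Chebyshev closes the argument.

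The main obstacle is just the bookkeeping around the choice of $c$: one must simultaneously handle small $d$, where the absolute constraint $\Ex[M] \geq 8$ governs, and large $d$, where $\Ex[M] \geq 2d$ is the binding constraint. Both regimes collapse into the single requirement $\Ex[M] \geq \max(8, 2d)$, and the calculation above shows this is achieved by any fixed $c$ chosen large enough to overpower the $(d/n)^{2/c}$ loss in the exponential lower bound. No technical novelty is needed beyond the standard coupon-collector variance estimate.
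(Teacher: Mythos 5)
Your proposal is correct and follows essentially the same route as the paper's proof: a second-moment/Chebyshev argument on the number of unsampled elements, showing its expectation exceeds $2d$ (and a constant) and that its variance is controlled. The only difference is cosmetic --- you bound $\mathrm{Var}(M)\le\Ex[M]$ via the negative correlation $\mathrm{Cov}(Z_i,Z_j)\le 0$, which is a bit slicker than the paper's explicit covariance computation bounding the cross terms by roughly $r^2(1-1/r)^{2n}\cdot\frac{2n}{r^2-2r}$, but both yield the same Chebyshev conclusion.
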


\subsection{Lower Bounds against Majority Voting Strategies}
\label{sec:majority-voting-lower-bounds}

Recall that one class of optimal learners in the standard i.i.d.\ setting with no adversary is based on majority voting among ERMs. This includes the algorithms given by \cite{hanneke2016optimal}, Bagging \citep{larsen2023bagging} and Majority-of-Three \citep{aden2024majority}. In this section, we prove a lower bound generally against the class of majority voting strategies, showing that a monotone adversary may force such strategies into obtaining suboptimal error. 

We first prove a lower bound for majority voting invoked with \emph{some} ERM, i.e.,\ where we explicitly design an adversarial ERM strategy that causes suboptimal error. Note that an optimal error bound of $O(d/n)$ holds for the above mentioned voting algorithms \textit{regardless} of which ERM algorithm they use as the base learner. With our adversarial ERM, the lower bound we prove holds for these algorithms even with a constant number of adversarial samples $m= O(1)$.

The adversarial ERM is however quite unnatural, and is explicitly designed to force a failure of majority voters. We therefore also consider a more natural ERM strategy, in which a uniformly random \emph{consistent} hypothesis from $\mathcal{H}$ is returned as the base learner. Here, a hypothesis is consistent if it correctly labels all training samples. We extend the previous lower bound to this case as well, still needing only a constant number of adversarial samples for the optimal algorithms mentioned above.

We now formally state our first lower bound.

\begin{theorem}[Majority Voting Lower Bound for Fixed ERM]
    \label{thm:majority-voter-lb-fixed-erm}
    For any $d \geq 1$ and $n \geq cd$ for sufficiently large constant $c>0$, there exists an ERM $E$, a hypothesis class $\mathcal{H}$ of VC dimension $d$ over a finite domain $\mathcal{X}$, a target $h^\star \in \mathcal{H}$ and a distribution $\mathcal{D}$ over $\mathcal{X}$, such that for any majority voter with sub-samples of size at least $t$, there is a monotone adversary $\mathcal{A}$ that uses $n$ clean and $m=2n/t$ adversarial samples, such that the expected error of the majority voter on $S \sim \adv_{\mathcal{D}, h^\star, \mathcal{A}}(n, m)$ and base learner $E$ is $\Omega(d \log(n/d)/n)$.
\end{theorem}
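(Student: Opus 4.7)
The plan is to construct a hypothesis class in which a single type of ``signal'' point $y_Z$ suffices for the adversary to steer a fixed adversarial ERM toward a specific bad hypothesis $g_Z$, and to rely on \Cref{lem:coupon} to guarantee, with probability at least $1/2$, a $d$-subset $Z$ of indices entirely absent from the clean sample. The choice $m = 2n/t$ is calibrated so that each voter sub-sample of at least $t$ indices contains a copy of the signal with probability at least $2/3$; a Markov-based argument then converts this into a constant probability that a strict majority of the sub-sample ERMs return the same $g_Z$.

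\textbf{Construction and adversarial ERM.} Set $r = cn/\log(n/d)$ so that \Cref{lem:coupon} applies, take $\mathcal{D}$ to be uniform on data points $x_1, \dots, x_r$, and extend the domain with a signal point $y_Z$ for every $d$-subset $Z \subseteq [r]$ (each of $\mathcal{D}$-mass $0$). Let $h^\star \equiv 0$, and for each $Z$ define $g_Z(x_i) = \Ind[i \in Z]$ and $g_Z(y_{Z'}) = \Ind[Z' \ne Z]$. The key property is that $g_{Z'}(y_Z) = 1$ for every $Z' \ne Z$, so adding $y_Z$ (whose true label is $h^\star(y_Z)=0$) to a training set rules out every $g_{Z'}$ with $Z' \ne Z$, while $g_Z$ itself stays consistent as long as no $x_i$ with $i \in Z$ appears. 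Set $\mathcal{H} = \{h^\star\} \cup \{g_Z\}$, and let the adversarial ERM $E(T)$ return $g_{Z^\star}$ whenever $T$ contains $y_{Z^\star}$ for a unique $Z^\star$ and $g_{Z^\star}$ is consistent with $T$, and return $h^\star$ otherwise; this is a valid ERM by the above. A case analysis over whether a candidate shattering set consists purely of $x$-points, purely of $y$-points, or a mixture verifies that $\mathcal{H}$ has VC dimension $d$ (with a minor separate argument for the edge case $d=1$).

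\textbf{Adversary and analysis.} On receiving the clean sample, the adversary invokes \Cref{lem:coupon} to select a $d$-subset $Z \subseteq [r]$ absent from it (event of probability at least $1/2$) and appends $m = 2n/t$ copies of $y_Z$. Since $Z$ is absent from the clean sample and the adversary adds no $x_i$ with $i \in Z$, $g_Z$ is consistent with every training sub-sample. For any fixed voter sub-sample $L_j$ of $t$ distinct indices, a standard hypergeometric computation gives
\begin{align*}
    \Pr_{\Pi}\!\left[L_j \text{ contains no copy of } y_Z\right]
    = \frac{\binom{n}{t}}{\binom{n+m}{t}}
    \leq \left(\frac{n}{n+m}\right)^t
    = \left(\frac{t}{t+2}\right)^t
    \leq \frac{1}{3},
\end{align*}
so each sub-sample ERM outputs $g_Z$ with probability at least $2/3$. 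Letting $N$ count the sub-sample ERMs that fail to output $g_Z$, linearity gives $\Ex[N] \leq k/3$, and Markov's inequality yields $\Pr[N \geq k/2] \leq 2/3$. On the complementary event (probability at least $1/3$), a strict majority of the sub-sample ERMs output $g_Z$; since each sub-sample ERM outputs either $g_Z$ or $h^\star$, the majority voter's hypothesis then agrees with $g_Z$ at every point and therefore errs on the set $\{x_i : i \in Z\}$, which has $\mathcal{D}$-mass $d/r$. Combining with the $\geq 1/2$ coupon-collector event, the expected error is at least $\tfrac{1}{2} \cdot \tfrac{1}{3} \cdot \tfrac{d}{r} = \Omega\!\left(d \log(n/d)/n\right)$.

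\textbf{Main obstacle.} The primary technical subtlety is verifying the VC-dimension bound: the $y$-points introduce many new hypothesis labelings, so ruling out shattering of any $d+1$-element set by arbitrary mixtures of $x$- and $y$-points requires a careful case analysis, with a separate modification needed at $d = 1$ where the naive $y_{\{i\}}$-construction has VC dimension $2$ rather than $1$.
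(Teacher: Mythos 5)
Your proposal follows essentially the same route as the paper's proof: the same domain of $x$-points plus signal points indexed by $d$-subsets, the same uniform distribution, the same adversary (append $m$ copies of $y_Z$ for a $d$-subset $Z$ missing from the clean sample, guaranteed by \Cref{lem:coupon}), the same hypergeometric bound $\binom{n}{t}/\binom{n+m}{t}\le (n/(n+m))^t$ exploiting that the voter's index lists depend only on $|S|$, and the same Markov step to get a majority of corrupted sub-samples. The one genuine deviation is your hypothesis class: setting $g_Z(y_{Z'})=\Ind[Z'\neq Z]$ is the construction the paper reserves for the \emph{random}-ERM variant (\Cref{thm:majority-voter-lb-random-erm}), where one must rule out all competing consistent hypotheses; for the fixed adversarial ERM it buys nothing, and it is exactly what breaks the VC bound at $d=1$ (your class shatters pairs such as $\{y_{Z_1},y_{Z_2}\}$ and $\{x_i,y_Z\}$, so its VC dimension is $\max(d,2)$), a gap you flag but do not close. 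The fix is simply to make every non-target hypothesis predict $0$ on all signal points, as the paper does: the class then has VC dimension exactly $d$ for every $d\ge 1$, and your designed ERM remains valid since the hypothesis it outputs still labels $y_Z$ with $0$ and no $x_i$ with $i\in Z$ appears in the sample. Two further cosmetic points: the adversary must output $m$ points even when no $d$-subset is missing (e.g., $m$ copies of $x_1$, as in the paper), and your constant-probability bookkeeping ($1/3$ vs.\ the paper's $1/e$) is otherwise fine.
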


\begin{proof}
Let $r = cn/\log(n/d)$ for a sufficiently large constant $c>0$ and assume $r \geq d$ (which is the case for $n \geq  cd$). We define the input domain $\mathcal{X} = \{x_1,\dots,x_r, y_1,\dots,y_{\binom{r}{d}}\}$. We think of each $y_i$, instead, as $y_T$, corresponding to a $d$-sized subset $T$ of $\{1,\dots,r\}$. The hypothesis class $\mathcal{H}$ contains every hypothesis predicting $1$ on precisely $d$ of the $r$ points in $\{x_1,\dots,x_r\}$ and $0$ elsewhere. Finally, $\mathcal{H}$ also contains the target hypothesis $h^\star$ which is the constant $0$ function on the entire domain $\mathcal{X}$. Note that the VC dimension of $\mathcal{H}$ is $d$. The distribution $\mathcal{D}$ is uniform over $\{x_1,\dots,x_r\}$.

We now define the adversary $\mathcal{A}(\mathcal{D}, h^\star)$. On a clean sample $C \sim \mathcal{D}^n$, it checks if there is a subset $T$ of $d$ indices in $\{1,\dots,r\}$ for which %
$x_i \notin C,\, \forall i \in T$. If so, it picks an arbitrary such $T$ and outputs $m$ copies of the sample $y_T$ (recall each $y_T$ corresponds to a $d$-sized subset of $\{1,\dots,r\}$). Otherwise, it simply adds $m$ copies of the sample $x_1$.

We next define an ERM. On any labeled sample $S$, it checks if $S$ contains any point $y_T$. If so, it obtains the $d$-sized subset $T$ of $\{1,\dots,r\}$. If $S$ does not contain any $x_i$ for $i \in T$, it then outputs the hypothesis in $\mathcal{H}$ predicting $1$ on $x_i$ for every $i \in T$ and $0$ elsewhere. In all other cases, it outputs $h^\star$. Note that this is a valid ERM as its output hypothesis correctly predicts the label of all samples in its input.

We finally lower bound the expected error of any majority voter that is given input $S \sim \adv_{\mathcal{D}, h^\star, \mathcal{A}}(n,m)$, when $m$ is chosen to be sufficiently large as a function of $n$ and the majority voter's sub-sample size $t$. Here, we observe that if more than half of the subsets $S_1,\dots,S_k$ constructed by the majority voter from $S$ contain at least one copy of an adversarial sample $y_T$, corresponding to a set $T \subset \{1,\dots,r\}$, then the majority voter predicts $1$ on $x_i$ for every $i \in T$, resulting in an error of $|T|/r = \Omega(d \log(n/d)/n)$. We thus show that this event happens with constant probability. Recall from \Cref{def:majority-voter} that the lists $L_i$ of indices that determine the subsets $S_1,\dots,S_k$ are determined by the majority voter as a function of the size of the input set $S$ alone ($n+m$ in our case), and hence, the lists are independent of the random shuffling in $\adv_{\mathcal{D}, h^\star, \mathcal{A}}(n, m)$. So, if the adversary added $m$ adversarial samples $y_T$ to the initial input, then the probability that a sub-sample $S_i$ contains no adversarial samples is $\binom{n}{t}/\binom{n+m}{t} \leq (n/(n+m))^t \leq \exp(-mt/(n+m))$. For $m \geq 2n/t$, this is at most $e^{-1}$. Thus, conditioned on the initial clean sample $C \sim \mathcal{D}^n$ satisfying that there are at least $d$ indices in $\{1,\dots,r\}$ for which $x_i \notin C, \, \forall i \in T$, we get that the expected number of sub-samples $S_i$ with no adversarial sample is at most $k/e$. By Markov's inequality, we get that with a constant probability, more than $k/2$ sub-samples contain an adversarial sample $y_T$ and thus the error is $\Omega(d \log(n/d)/n)$. Therefore, all that remains to show is that there are at least at least $d$ indices in $\{1,\dots,r\}$ for which $x_i \notin C, \, \forall i \in T$ with constant probability. This follows immediately from our choice of $r$ and Lemma~\ref{lem:coupon}.
\end{proof}

We next consider the ERM strategy in which a uniformly random consistent hypothesis from $\mathcal{H}$ is returned on a given input sample. Denote this ERM by $\randERM$.

\begin{theorem}[Majority Voting Lower Bound for Random ERM]
    \label{thm:majority-voter-lb-random-erm}
    For any $d \geq 1$ and $n \geq cd$ for sufficiently large constant $c>0$, 
    there exists a hypothesis class $\mathcal{H}$ of VC dimension $d$ over a finite input domain $\mathcal{X}$, a target $h^\star \in \mathcal{H}$ and a distribution $\mathcal{D}$ over $\mathcal{X}$, such that for any majority voter with sub-samples of size at least $t$, there is a monotone adversary $\mathcal{A}$ that uses $n$ clean and $m=2n/t$ adversarial samples, such that the expected error of the majority voter on $S \sim \adv_{\mathcal{D}, h^\star, \mathcal{A}}(n, m)$ and base learner $\randERM$ is $\Omega(d \log(n/d)/n)$.
\end{theorem}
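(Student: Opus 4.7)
My plan is to extend the proof of \Cref{thm:majority-voter-lb-fixed-erm} by modifying the hypothesis class so that, whenever the adversary's corrupted sample appears in a sub-sample $S_i$, the version space $\{h\in\mathcal{H}:h|_{S_i}\text{ agrees with labels}\}$ collapses in a way that biases $\randERM$ towards a bad hypothesis. Keep the domain $\mathcal{X}=\{x_1,\dots,x_r\}\cup\{y_T:T\in\binom{[r]}{d}\}$, keep $r=cn/\log(n/d)$, keep $\mathcal{D}$ uniform on $\{x_1,\dots,x_r\}$, and keep $h^\star\equiv 0$, exactly as in \Cref{thm:majority-voter-lb-fixed-erm}. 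The key modification is to redefine each $h_T$ by $h_T(x_i)=[i\in T]$, $h_T(y_T)=0$, and $h_T(y_{T'})=1$ for every $T'\neq T$. With this change, the adversarial labeled sample $(y_T, 0)$ is consistent with $h^\star$ and $h_T$ but inconsistent with every $h_{T'}$ for $T'\neq T$ (since each such $h_{T'}$ labels $y_T$ as $1$). Therefore, on a sub-sample $S_i$ containing $y_T$, the version space collapses to exactly $\{h^\star, h_T\}$, and $\randERM$ outputs the bad hypothesis $h_T$---which predicts $1$ on $\{x_i:i\in T\}$---with probability exactly $1/2$.

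Given this, I would reuse the coupon-collector and shuffle arguments from \Cref{thm:majority-voter-lb-fixed-erm}: with constant probability over the clean sample $C$, \Cref{lem:coupon} gives a $d$-subset $T$ with $x_i \notin C$ for all $i \in T$; the adversary adds $m=2n/t$ copies of $y_T$; and by the same Markov argument, with constant probability a fraction $\alpha$ of the $k$ sub-samples each contain at least one copy of $y_T$. For each $x_i$ with $i\in T$, the votes from these $\alpha k$ sub-samples are independent unbiased $\{0,1\}$-coin flips (via $\randERM$'s choice between $h^\star$ and $h_T$), while the remaining sub-samples vote $1$ with probability only $O(d/r)$, a negligible term. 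A standard anti-concentration bound on the resulting binomial then gives that the strict majority of the votes equals $1$ with at least a constant probability, so each $x_i$ with $i\in T$ is misclassified with constant probability, and the expected error under $\mathcal{D}$ is $\Omega(d/r)=\Omega(d\log(n/d)/n)$.

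\textbf{Main obstacle.} Two delicate trade-offs require care. First, the enriched y-labels inflate the VC dimension: any two y-points $y_{T_1},y_{T_2}$ admit all four labelings (from $h^\star$, $h_{T_1}$, $h_{T_2}$, and any $h_T$ with $T\notin\{T_1,T_2\}$), so the class has VC dimension at least $2$. A careful counting argument---three y-points admit only five distinct labelings, and mixed $(d+1)$-subsets with $p$ x-coordinates and $s$ y-coordinates admit at most $(s+2)\cdot 2^p$ labelings---caps the VC dimension at $d$ whenever $d\geq 2$; however, the $d=1$ case requires a separate, more restrictive construction (e.g.\ a single y-point per candidate target $i$, with labels tailored so that each individual adversarial sample collapses the version space to $\{h^\star,h_i\}$ for exactly the adversary's chosen $i$). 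Second, the anti-concentration bound on $\mathrm{Bin}(\alpha k,1/2)$ yields a constant lower bound on the majority-tip probability only when $\alpha$ is close enough to $1$ that $(1-\alpha)\sqrt{k}=O(1)$; since $m=2n/t$ forces $\alpha\leq 1-e^{-\Theta(1)}$ uniformly, this fails for voters with very large $k$ such as Hanneke's with $k=n^{\log_4 3}$. To cover these regimes, one must strengthen the construction to introduce multiple bad hypotheses per $T$---all predicting $1$ on $\{x_i:i\in T\}$ but distinguishable elsewhere---so that $\randERM$'s per-sub-sample bias strictly exceeds $1/2$, while simultaneously keeping the VC dimension at $d$. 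Threading the needle between VC dimension, version-space bias, and anti-concentration is the crux of the argument and the principal technical hurdle in promoting the fixed-ERM lower bound to the random-ERM setting.
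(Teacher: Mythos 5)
Your base construction reproduces the paper's template hypotheses ($h_T(x_i)=\Ind[i\in T]$, $h_T(y_T)=0$, $h_T(y_{T'})=1$ for $T'\neq T$), but with only $\{h^\star,h_T\}$ surviving on a corrupted sub-sample, $\randERM$ returns the bad hypothesis with probability exactly $1/2$, and your own analysis shows this is not enough: the shuffling/Markov argument only guarantees that a constant fraction $\alpha$ of the $k$ sub-samples (e.g.\ slightly more than half, and in general bounded away from $1$) contain a copy of $y_T$, so the number of bad votes behaves like $\mathrm{Bin}(\alpha k,1/2)$ with mean $\alpha k/2 < k/2$ by a constant factor, and the probability that the majority tips to $1$ is exponentially small in $k$. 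Since the theorem must cover voters with large $k$ (Hanneke's algorithm has $k=N^{\log_4 3}$), the argument as written does not prove the statement. You correctly identify the needed strengthening---several bad hypotheses per $T$, all predicting $1$ on $T$ but distinguishable elsewhere, so the per-sub-sample bias exceeds $1/2$---but you leave it as an unresolved ``principal technical hurdle,'' so the proposal has a genuine gap at precisely the step that distinguishes the random-ERM theorem from the fixed-ERM one.

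For comparison, the paper closes this gap with a short amplification trick: it adds $1000$ fresh domain points $z_1,\dots,z_{1000}$ carrying zero mass under $\mathcal{D}$ (and never used by the adversary), and replaces each $h_T$ by $1000$ copies $h_{T,j}$ that agree with $h_T$ on all $x$'s and $y$'s and differ only on the $z_j$'s. On a sub-sample containing $(y_T,0)$ the consistent hypotheses are exactly $h^\star,h_{T,1},\dots,h_{T,1000}$, so $\randERM$ predicts $1$ on all of $T$ with probability $1000/1001$; then a plain Markov argument (no binomial anti-concentration needed) shows that with constant probability at least $k/2$ of the returned hypotheses are bad, giving error $\Omega(d\log(n/d)/n)$. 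The VC dimension of the enlarged class is at most $d+2$, which the paper accepts (it only affects constants); your concerns about pinning the VC dimension to exactly $d$ and about a separate $d=1$ case are not the obstruction---the missing ingredient is the explicit bias-amplification construction and its verification.
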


\begin{proof}
Our proof extends the proof of Theorem~\ref{thm:majority-voter-lb-fixed-erm} and we only explain the modifications. We thus assume that the reader has read the proof of Theorem~\ref{thm:majority-voter-lb-fixed-erm}.

First, we expand the universe by adding 1000 additional points $\{z_1,\dots,z_{1000}\}$. The target hypothesis is still the constant $0$ function. For each subset $T$ of $d$ indices among $\{1,\dots,r\}$, we will first specify a template hypothesis $h_T$ that determines labels on all points except $z_1,\dots,z_{1000}$. Concretely, $h_T(x_i) = 1$ if $i \in T$, and $h_T(x_i)=0$ otherwise. We set $h_T(y_T)=0$ for the $y_T$ that corresponds to $T$ (see the previous proof), and set $h(y_{T'})=1$ otherwise. 
Finally, we add $1000$ near-identical copies of $h_T$ to $\mathcal{H}$. The copy $h_{T,j}$ takes the same values as $h_T$ on $\{x_1,\dots,x_r,y_1,\dots,y_{\binom{r}{d}}\}$. On the points $z_1,\dots,z_{1000}$ it takes the value $1$ on $z_j$ and $0$ elsewhere. In all, $\mathcal{H}$ comprises of all the $h_{T,j}$ functions across all $T$ and $j$, together with $h^\star$. Observe that the VC dimension of $\mathcal{H}$ is at most $d+2$. 

We still consider the distribution $\mathcal{D}$ to be uniform over $\{x_1,\dots,x_r\}$, and employ the same adversarial strategy as in the proof of \Cref{thm:majority-voter-lb-fixed-erm}. %
Following the analysis there further, %
there is a constant probability that more than $51k/100$ of the sub-samples $S_1,\dots,S_k$ contain a copy of an adversarial sample $y_T$ with ground-truth label $0$; so, we condition on this event. Since $h_{T'}(y_T)=1$ for any $T' \neq T$, the only hypotheses in $\mathcal{H}$ consistent with these sub-samples are $h^\star$ and $h_{T,1},\dots,h_{T,1000}$. Thus, for each of these (at least $51k/100$ many) sub-samples, $\randERM$ returns a hypothesis predicting $1$ on all of $T$ with probability $1000/1001$. In particular, if we consider the first $51k/100$ of these sub-samples, the expected number of hypotheses returned by $\randERM$ that predict $1$ on all of $T$ is at least $510k/1001$. So, by Markov's inequality, with a constant probability, we have that at least $k/2$ many of the returned hypotheses predict $1$ on all of $T$, meaning that the expected error of the majority vote is at least $\Omega(d\log(n/d)/n)$ as required.
\end{proof}

\subsection{Lower Bound against One-Inclusion Graph Algorithms}
\label{sec:oig-lower-bounds}

We now turn to proving lower bounds for OIG algorithms. Since the predictions made by an OIG algorithm depend critically on the orientation of the edges chosen by the algorithm, we need to define which orientation strategy we will prove lower bounds against. In the standard i.i.d.\ setting, any orientation with maximum out-degree of $t$ gives an expected error of $O(t/n)$ \citep{haussler1994predicting}. Combined with \Cref{thm:oig-outdegree}, it follows that for a class of VC dimension $d$, we can compute an orientation with max out-degree $d$, and thus obtain an optimal expected error of $O(d/n)$.

In light of the above, we consider the following natural orientation strategy: for a one-inclusion graph $\mathcal{G}$ with vertex set $V$, let $\tau$ be the smallest achievable maximum out-degree, i.e.
$$\tau = \min_\sigma \max_{v \in V} \outd_\sigma(v),$$
where $\outd_\sigma(v)$ denotes the out-degree of $v$ under the orientation $\sigma$. We consider the orientation strategy $\OSrand$ that picks a uniformly random orientation $\sigma$ among all orientations that satisfy $\max_{v \in V} \outd_\sigma(v)=\tau$, i.e.\ an optimal max out-degree. For this orientation strategy, we show:

\begin{theorem}[Lower Bound for OIG]
    \label{thm:oig-lower-bound}
    For any $n \geq c$ for sufficiently large constant $c>0$, there exists a hypothesis class $\mathcal{H}$ of VC dimension $1$ over a finite input domain $\mathcal{X}$, a target $h^\star \in \mathcal{H}$ and a distribution $\mathcal{D}$ over $\mathcal{X}$, such that for the OIG algorithm with orientation strategy $\OSrand$, there is a monotone adversary that uses $n$ clearn and $m=n$ adversarial samples, such that the expected error of the OIG algorithm is at least $1/4$. Furthermore, for any $1 \leq k \leq c^{-1}n$, there is a monotone adversary that uses $n$ clean and $m=c n/\log(n/k)$ adversarial samples, such that the expected error of the OIG algorithm with orientation strategy $\OSrand$ is $\Omega(k \log(n/k)/n)$.
\end{theorem}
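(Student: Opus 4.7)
The plan is to use the class of singletons on a suitable domain. Take $\mathcal{X} = \{x_1,\dots,x_r\}$ with $\mathcal{D}$ the uniform distribution on $\mathcal{X}$, and set $\mathcal{H} = \{h^\star\} \cup \{h_j : j \in [r]\}$, where $h^\star \equiv 0$ and $h_j(x_i) = \Ind[i=j]$; a direct check shows this class has VC dimension $1$. For the $1/4$-error claim with $m = n$ corrupted samples, take $r = 2n$; for the parameterized claim, take $r = cn/\log(n/k)$ for a sufficiently large constant $c$ (so $m = r$). The adversary's strategy is to eliminate ``singletons'' of the clean sample $C$: for each point $x_j$ appearing exactly once in $C$, the adversary adds one more corrupted copy of $x_j$ (wasting any leftover budget on points already appearing multiple times). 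After this, every element in the training set $T$ has multiplicity $c_j \ge 2$.

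The heart of the argument is a structural claim about the one-inclusion graph $\mathcal{G}$ on $(x_1,\dots,x_N,x)$ when the test point $x \notin T$. The distinct vertices of $\mathcal{G}$ are $v_0$ (all zeros), $v_{h_x}$ (a single $1$ at the test coordinate), and one vertex $v_{h_j}$ for each $j \in T$ (with $1$'s exactly at the $c_j$ training positions of $x_j$). Enumerating pairs: $v_0$ and $v_{h_j}$ differ at the $c_j \ge 2$ positions of $x_j$; $v_{h_j}$ and $v_{h_{j'}}$ differ in $c_j + c_{j'} \ge 4$ positions; $v_{h_j}$ and $v_{h_x}$ differ in $c_j + 1 \ge 3$ positions; only the pair $(v_0, v_{h_x})$ differs in exactly one coordinate (the test direction). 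Hence $\mathcal{G}$ consists of a single edge $(v_0, v_{h_x})$ plus isolated vertices, so $\tau = 1$ and there are exactly two max-out-degree-$1$ orientations (the two sinks of this single-edge tree, while the isolated vertices contribute trivially). Each is selected by $\OSrand$ with probability $1/2$, and when the sink is $v_{h_x}$ the edge points at the $1$-vertex, causing the OIG to predict $1$ on $x$---which is wrong since $h^\star(x) = 0$.

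Finally, the expected error is $\Pr_{C, x}[x \notin T] \cdot \tfrac{1}{2}$ on the event that the adversary's singleton-elimination succeeded. For the $1/4$ claim with $r = 2n$: the number of singletons in $C$ is at most $|C| = n = m$, so elimination always succeeds; and $\Pr_{C, x}[x \notin T] = (1 - 1/r)^n \ge 1 - n/r = 1/2$ by Bernoulli's inequality, yielding error $\ge 1/4$. For the parameterized claim: a short computation gives $\Ex[s] = n(1-1/r)^{n-1} = (L/c) e^{-L/c} \cdot m \le m/e$ where $L = \log(n/k)$, so by Markov $\Pr[s \le m] \ge 1 - 1/e$; independently, \Cref{lem:coupon} applied with $d = k$ gives that $\ge k$ elements of $\mathcal{X}$ are missed by $C$ with probability $\ge 1/2$. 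By inclusion--exclusion these two events co-occur with probability $\ge \tfrac{1}{2} - \tfrac{1}{e} > 0$, on which $\Pr_x[x \notin T] \ge k/r = k\log(n/k)/(cn)$ and the adversary has succeeded, so the expected error is $\Omega(k\log(n/k)/n)$. The main technical subtlety is arranging the constant $c$ so that $\Ex[s]$ is strictly smaller than $m$ (which hinges on the particular choice of $r$) and the two events co-occur with constant probability; the calculation above shows this is achieved for any $c > 0$ with the lower bound on joint probability independent of $c$.
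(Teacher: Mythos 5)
Your proposal is correct, and it follows the same skeleton as the paper's proof (VC-dimension-1 singleton-style class with constant-$0$ target, an adversary that forces the one-inclusion graph seen at a missed test point to consist of a single edge incident to the all-zero vertex, a coin flip of $\OSrand$ on that edge, and \Cref{lem:coupon} for the parameterized claim), but the construction differs in one genuine way. The paper enlarges the domain with shadow points $y_i$ outside the support of $\mathcal{D}$ and has the adversary add $y_i$ for each observed $x_i$, so that every hypothesis $h_i$ touched by the sample disagrees with $h^\star$ on two \emph{distinct} domain points; you instead keep the bare singleton class and have the adversary duplicate each singleton of the clean sample, so that the disagreement is spread over two \emph{coordinates} of the training sequence. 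Under the paper's definition of the OIG (projections onto the sequence $(x_1,\dots,x_N,x)$ with multiplicity, vertices in $\{0,1\}^{N+1}$), your structural claim is valid and the argument goes through verbatim; the resulting construction is a bit leaner (domain of size $r$ rather than $2r$, no off-support points). What the paper's $y_i$-trick buys is robustness to the natural variant of the OIG algorithm that deduplicates the unlabeled sample before building the graph---against such an implementation your duplicated copies would collapse back to a single coordinate and the all-zero vertex would regain edges to every singleton hypothesis, whereas the paper's construction is unaffected. Two small remarks on your write-up: the Markov/budget step in the parameterized regime is unnecessary (and its claimed bound $\Ex[s]\le m/e$ is not exactly right at the margins, e.g.\ when $n\approx r$), because singletons are distinct domain elements, so $s\le r=m$ deterministically and the elimination always succeeds; dropping it lets you use the coupon-collector event alone, with probability at least $1/2$, giving the cleaner constant $k/(4r)$. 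Also, your first-regime bound $\Pr_{C,x}[x\notin T]=(1-1/r)^n\ge 1/2$ is fine, matching the paper's counting of unseen support points.
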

Choosing e.g., $k=n^{1-\eps}$ for an arbitrarily small constant $\eps > 0$ gives an adversary that uses $O(n/\log(n))$ adversarial samples and causes the OIG algorithm with orientation strategy $\OSrand$ to suffer an expected error of $\Omega(n^{-\eps})$.

\begin{proof}
We start by presenting the lower bound with an adversary that uses $n$ adversarial samples and then generalize it to $c n/\log(n/k)$ samples.

Consider the domain $\mathcal{X} = \{x_1,\dots,x_{2n}, y_{1}, \dots y_{2n}\}$. Let $\mathcal{H}$ be the hypothesis class that for every $i \in \{1,\dots,2n\}$ has a hypothesis $h_{i}$ with $h_{i}(x_j) = h_i(y_j) = 1$ if $j=i$ and $0$ otherwise. We let the target hypothesis $h^\star$ be the constant 0 function and add it to $\mathcal{H}$ as well. The VC dimension of this hypothesis class is $1$. We consider the distribution $\mathcal{D}$ that is uniform over $\{x_1,\dots,x_{2n}\}$.

We now define the adversary $\mathcal{A}(\mathcal{D}, h^\star)$. On a clean sample $C \sim \mathcal{D}^n$, it picks all $x_i \in C$ and adds in the corresponding point $y_i$.

To analyze the expected error of the OIG algorithm with orientation strategy $\OSrand$ on the above distribution and adversary, assume the algorithm receives a training sample $S \sim \adv_{\mathcal{D}, h^\star, \mathcal{A}}(n,m)$ and needs to make a prediction on a fresh point $x_j \notin S$ (if $x_j \in S$, the OIG algorithm will be correct on $x_j$). The event $x_j \notin S$ happens with probability at least $1/2$ since $\mathcal{D}$ is uniform over $2n$ points. Conditioned on this event, consider all the vertices $V$ in the graph $\mathcal{G}$ constructed by the algorithm, which correspond to projections of all the hypotheses in $\mathcal{H}$ on the (unlabeled) points in $S \cup \{x_j\}$. 
For any $h_i \in \mathcal{H}$, observe that if $x_i \in S$, then $h_i$ predicts $1$ on at least two points in $S$, namely $x_i$ and $y_i$. This implies that there is no edge between the all-0 vertex and the vertex in $V$ corresponding to $h_i$. Similarly, the vertex corresponding to $h_i$ also does not share an edge with any vertex corresponding to $h_\ell$, where $\ell \neq i$ and $x_{\ell} \in S$. Next, consider the hypothesis $h_j$: since $x_j \notin S$, $h_j$ predicts $1$ on precisely $x_j$, and $0$ on all other points in $S$. There is thus an edge between the vertex corresponding to $h_j$ and the all-0 vertex, but no edge between the vertex corresponding to $h_j$ and any vertex corresponding to $h_\ell$, where $\ell \neq j$ and $x_{\ell} \in S$. Finally, for any $h_i$ with $x_i \notin S$ and $i \neq j$, observe that $h_i$ predicts $0$ on all of $S \cup \{x_j\}$, hence collapsing to the all-0 vertex. We have thus argued that the only edge in the graph is the edge between the all-0 vertex and the vertex corresponding to $h_j$ --- every other vertex in the graph is isolated. An orientation $\sigma$ simply has to orient this single edge, and for either of the two ways that it can orient it, the max out-degree of a vertex in the graph is 1. Since $\sigma$ is chosen uniformly at random among all orientations with smallest max out-degree, with probability $1/2$, this edge will be oriented away from the all-0 vertex, in which case the algorithm mispredicts the label on $x_j$. We conclude that the expected error is at least $1/4$.

To extend the above lower bound to $r=cn/\log(n/k)$ adversarial samples, we consider the domain $\mathcal{X} = \{x_1,\dots,x_{r}, y_{1}, \dots y_{r}\}$ with $\mathcal{D}$ uniform over $\{x_1,\dots,x_r\}$. We let $\mathcal{H}$ be as above, so that $\mathcal{H}$ contains a hypothesis $h_i$ for $i=1,\dots,r$ making predictions $h_i(x_j)=h_i(y_j)=1$ if $j=i$ and $0$ otherwise. We let $h^\star$ be all-0.

We now define the adversary $\mathcal{A}(\mathcal{D}, h^\star)$. On a clean sample $C \sim \mathcal{D}^n$ it picks all $i \in \{1,\dots,r \}$ such that $x_i$ appears at least once in $C$ and adds in the point $y_i$. This specifies the draw of $S \sim \adv_{\mathcal{D}, h^\star, \mathcal{A}}(n,m)$ for $m=r$. %

By the same arguments as above, we see that if the OIG algorithm is asked to predict the label of a point $x_j \notin S$, then the one-inclusion graph has precisely one edge in it connecting the all-0 vertex and the vertex corresponding to the hypothesis $h_j$. We thus have that the OIG algorithm errs with probability at least $1/2$ on such $x_j$. By the coupon collector argument in \Cref{lem:coupon}, we have that with probability at least $1/2$ over the draw of $S$, there are at least $k$ points $x_i$ that do not appear in $S$. Each of these may be drawn from $\mathcal{D}$ with probability $1/r$, and causes the OIG algorithm to incur an error $1/2$. We conclude that the expected error of the OIG algorithm is thus $\Omega(k/r) = \Omega(k \log(n/k)/n)$.
\end{proof}

\section*{Acknowledgements}
CP was supported by Gregory Valiant's and Moses Charikar's Simons Investigator Awards, and a Google PhD Fellowship. KGL is funded by the European Union (ERC, TUCLA, 101125203).
AS is supported in part by ARO award
W911NF-21-1-0328, the Simons Foundation, NSF award DMS-2031883, a DARPA AIQ award, and an NSF
FODSI Postdoctoral Fellowship.
Views and opinions expressed are however those of the author(s) only and do not necessarily reflect those of the European Union or the European Research Council. Neither the European Union nor the granting authority can be held responsible for them. 
AS would further like to acknowledge various fruitful conversations with Jason Gaitonde, Surbhi Goel and Noah Golowich. 

\bibliographystyle{alpha}
\bibliography{references}

\appendix

\section{Proofs of Auxiliary Results}
\label{sec:aux}

\begin{proof}[Proof of Lemma~\ref{lem:coupon}]
Recall that $r= cn /\log(n/d)$ for sufficiently large constant $c>0$. Define an indicator $X_i$ for each $x_i$, taking the value $1$ if $x_i \notin S$. Then $\Pr[X_i=1] = (1-1/r)^n$. For two distinct $i \neq j$ we have $\Pr[X_j=1 \mid X_i=1] = (1-1/(r-1))^n$. Hence $\Ex[X_i X_j] = (1-1/r)^n(1-1/(r-1))^n$. We have $\Ex[\sum_i X_i] = r(1-1/r)^n$ and 
\begin{align*}
    \Ex\left[\left(\sum_i (X_i - (1-1/r)^{n})\right)^2\right] &= \\
    \sum_i \left(\Ex[X^2_i]-(1-1/r)^{2n}\right) + \sum_{i \neq j}\left(\Ex[X_i X_j] - (1-1/r)^{2n} \right)
    &= \\
    \sum_i \left((1-1/r)^n-(1-1/r)^{2n}\right) + \sum_{i \neq j}\left((1-1/r)^n(1-1/(r-1))^n - (1-1/r)^{2n}\right) &= \\
    r (1-1/r)^n-r(1-1/r)^{2n} + \sum_{i \neq j}(1-1/r)^{2n}\left[\left(\frac{(r-1)^2}{r(r-2)} \right)^n - 1\right] &\leq \\
    r(1-1/r)^n + r^2(1-1/r)^{2n}\left(\left(1 + \frac{1}{r^2 - 2r} \right)^n-1\right) &\leq \\
    r(1-1/r)^n + r^2(1-1/r)^{2n}\left(\exp\left(\frac{n}{r^2 - 2r} \right)-1\right). \tag{using $1+x \le e^x$} 
\end{align*}
For $c$ sufficiently large, we have $n \leq (1/2)(r^2-2r)$ and thus $\exp(n/(r^2-2r)) \leq 1+ 2n/(r^2-2r)$ (using that $\exp(x) \leq 1 + 2x$ for $0 \leq x \leq 1/2$); we conclude
\begin{align*}
    \Ex\left[\left(\sum_i (X_i - (1-1/r)^{n})\right)^2\right] &\leq \\
    r(1-1/r)^n + r^2(1-1/r)^{2n} \cdot \frac{2n}{r^2-2r} &\leq \\
    r(1-1/r)^n + r^2(1-1/r)^{2n}/16.
\end{align*}
Here again, the last inequality holds for $c$ sufficiently large. By Chebyshev's inequality, we have that
\begin{align*}
    \Pr\left[\sum_i X_i \leq (1/2)r(1-1/r)^n\right] &\leq \frac{r(1-1/r)^n + r^2(1-1/r)^{2n}/16}{(1/4)r^2(1-1/r)^{2n}} \\
    &= \frac{4}{r(1-1/r)^n} + 1/4 \\
    &\leq \exp(n/r)/r + 1/4 \\
    &= \exp(\log(n/d) c^{-1})/r + 1/4\\
    &\leq 1/2.
\end{align*}
where the last inequality holds for large enough constant $c>0$. We conclude by noting that $d \le (1/2)r(1-1/r)^n$ under the assumptions of the lemma, and for large enough $c$.
\end{proof}

\end{document}